\PassOptionsToPackage{table}{xcolor}
\documentclass{article} 
\usepackage[final]{2026_conference}

\usepackage{microtype}
\usepackage{hyperref}
\usepackage{url}
\usepackage{booktabs}
\usepackage{wrapfig}
\usepackage{amsmath}
\usepackage{amssymb}
\usepackage{mathtools}
\usepackage{amsthm}
\usepackage{makecell}
\usepackage{graphicx}
\usepackage{subcaption}
\usepackage{multirow}
\usepackage{multicol}
\usepackage{pifont}
\usepackage[misc]{ifsym}
\usepackage{enumitem}
\usepackage{color}
\usepackage{xcolor}
\definecolor{mygreen}{RGB}{0,120,90}
\usepackage{algorithm}          
\usepackage{algpseudocode}      
\usepackage{enumitem}
\usepackage{multirow}
\usepackage{titlesec}
\usepackage{graphicx}
\usepackage{caption}

\usepackage[capitalize,noabbrev]{cleveref}

\definecolor{darkblue}{rgb}{0, 0, 0.5}
\hypersetup{colorlinks=true, citecolor=mid, linkcolor=primary, urlcolor=mid}

\renewcommand{\cite}[1]{\citep{#1}}

\usepackage{amsmath,amsfonts,bm}

\def\eqref#1{equation~\ref{#1}}

\def\1{\bm{1}}

\DeclareMathAlphabet{\mathsfit}{\encodingdefault}{\sfdefault}{m}{sl}
\SetMathAlphabet{\mathsfit}{bold}{\encodingdefault}{\sfdefault}{bx}{n}

\newcommand{\cmark}{\ding{51}}
\newcommand{\xmark}{\ding{55}}
\newtheorem{lemma}{Lemma}

\newtheorem{assumption}{Assumption}
\newtheorem{theorem}{Theorem}

\usepackage[export]{adjustbox} 

\newcommand{\B}[1]{\textbf{#1}}

\title{EMO: A Closed-Loop Test-Time Computation\\
Framework via Alternating Context\\
Optimization and Logits Steering}

\author{
\mbox{
 Wei Xia$^{1,2}$, Yehui Tang$^{2{~\textrm{\Letter}}}$ and Zhi-Hong Deng$^{1{~\textrm{\Letter}}}$}\\
$^1$ State Key Laboratory of General Artificial Intelligence, School of Intelligence Science and Technology, Peking University\\
$^2$ Samsung Research, Beijing, China\\
\texttt{xwisawesome}@stu.pku.edu.cn, \texttt{zhdeng}@pku.edu.cn\\
\texttt{yehui.tang}@samsung.com \\
$^\textrm{\Letter}$~Corresponding Authors
}

\begin{document}

\maketitle

\begin{abstract}
Test-time context optimization adapts frozen language models through textual feedback, but ordinary conditional generation provides no separate control over how strongly to reinforce the distributional changes induced by a context update.
We introduce \textbf{EMO}, a training-free framework inspired by expectation--maximization that alternates question-specific context estimation (E-step) and distribution refinement (M-step).
The E-step revises a system context using textual gradients from an external judge without access to gold answers.
The M-step contrasts next-token distributions under the revised and preceding contexts at a shared prefix to construct a confidence-constrained logit correction.
Feedback on the steered response drives the next context update, coupling adaptation across rounds with intervention within each generation.
Theoretically, we connect parameter and context optimization through a shared target-KL objective and additive logit representations, and derive the steering vector from context-induced logit displacements.
Under context-improvement and faithful-steering assumptions, we prove a one-round decrease in target KL divergence for sufficiently small steering strength.
Under greedy decoding across eight models and six benchmarks, EMO improves the zero-shot baseline in 39 of 40 aggregated settings, averaging $+9.10$ percentage points.
With the same judge, it adds $+1.53$ points on average over context-only optimization (TextGrad), with consistent MMLU-Pro gains across models.
At matched target-model generation budgets, accuracy advantages over best-of-$N$ are strongest on Llama, while oracle coverage is higher in most comparisons despite fewer candidates.
The coverage--selection gap motivates allocating test-time computation jointly to candidate generation and verification.
\end{abstract}

\section{Introduction}
\label{sec:introduction}

Parameter updates and context refinement provide complementary routes to adapting a large language model (LLM).
Parameter optimization, including fine-tuning, reinforcement learning, and preference alignment, incorporates task data or feedback into model weights~\citep{hu2021lora,shao2024deepseekmath,rafailov2023direct}.
For an individual test question, context optimization adapts a frozen model by using feedback to revise its textual instructions before generating a new response~\citep{yuksekgonul2025textgrad,zhang2026agentic,madaan2023self}.
We study how to translate such question-specific feedback into explicit control of the decoding process.

A revised context already changes the next-token distribution at every prefix.
Ordinary conditional generation, however, provides no separate control over the strength or token support of those changes.
Comparing the distributions before and after a context update reveals how the revision changes token preferences at the current prefix.
This motivates our central question: \emph{how can feedback-driven context updates supply a controllable intervention on the generation they guide?}

We propose \textbf{EMO}, which alternates context optimization and distribution refinement for each question.
The \emph{E-step} revises a system context using textual gradients~\citep{yuksekgonul2025textgrad} from an external judge without access to gold answers.
The \emph{M-step} compares the revised and preceding contexts at a shared generated prefix, constructs a steering vector from their distribution contrast, and applies a confidence-constrained correction to the revised context's logits.
The resulting response is evaluated to provide feedback for the next E-step.
Each context update thus determines a decoding intervention whose output informs the next update, closing the loop while model weights remain fixed.

A logit-space view explains the relationship between the two steps.
Parameter and context updates both admit additive logit representations, while EMO uses the context-induced displacement to construct its additional steering correction.
Context edits induce finite logit changes through a frozen model, whereas parameter updates also alter the model's response to subsequent contexts and prefixes.
This representation links adaptation to decoding, while our one-round analysis identifies sufficient conditions for improvement toward a target distribution.

Across eight models and six benchmarks, EMO improves over Base in 39 of 40 aggregated settings, averaging $+9.10$ percentage points, with benefits spanning model scale, generation, family, and thinking mode.
With the judge and candidate count held fixed, adding the M-step yields $+1.53$ points on average over context-only optimization and improves MMLU-Pro for all eight models.
At matched target-model generation budgets, EMO also achieves higher oracle coverage than independent sampling in seven of nine settings despite producing fewer candidates, indicating benefits in candidate generation alongside improvements in selected accuracy.

\noindent Our contributions are summarized as follows:
\begin{itemize}[leftmargin=*, nosep]
    \item \textbf{EMO: coupled context optimization and steering.} A training-free framework that turns successive context updates into prefix-dependent decoding interventions and uses the steered responses to drive further context adaptation.

    \item \textbf{Logit-space interpretation and conditional analysis.} We relate the context-induced logit displacement to the implemented steering vector (Eq.~\ref{eq:context_displacement}) and establish one-round target-KL improvement under context-improvement and faithful-steering assumptions (Theorem~\ref{thm:monotonicity}).

    \item \textbf{Empirical evidence across four model axes.} Broad improvements over Base and additional gains over context-only optimization, supported by component ablations, decoding comparisons, and generation-budget analyses that distinguish candidate coverage from answer selection.
\end{itemize}

\section{Related Work}

\textbf{Parameter optimization.}
Parameter adaptation includes supervised fine-tuning and its parameter-efficient variants~\citep{hu2021lora}, reinforcement learning~\citep{schulman2017proximal,shao2024deepseekmath,guo2025deepseek,Yu2025DAPOAO}, preference optimization~\citep{rafailov2023direct}, and distillation~\citep{snell2022learning,agarwal2024policy}.
These approaches incorporate training signals into model weights.
EMO operates on a frozen checkpoint and uses feedback on the current question to adapt its context and decoding, making it complementary to parameter adaptation.

\textbf{Context optimization.}
A second line adapts textual inputs while keeping model weights frozen, including textual gradients and prompt optimization~\citep{yuksekgonul2025textgrad,yang2024large}, self-correction~\citep{madaan2023self,shinn2023reflexion,gou2024critic}, and agentic context engineering that evolves skills and memories~\citep{zhang2026agentic,ye2026meta}.
These approaches use textual revisions to influence subsequent generation.
EMO's E-step builds on this idea; its M-step additionally uses the distributional change between successive contexts to control decoding at each prefix.
The steered response then becomes the basis for further textual feedback.

\textbf{Inference-time intervention.}
A third family controls generation at inference time by modifying internal activations~\citep{Turner2023SteeringLM} or contrasting next-token distributions during decoding~\citep{li2023contrastive}.
Closely related task-aware approaches include CoS~\citep{he2025context}, SDA~\citep{xia2026sda}, SVDecode~\citep{hu2026distribution}, and JitRL~\citep{li2026just}.
EMO connects decoding-time intervention to per-question context optimization: each response informs the next context update, whose distribution contrast supplies the steering signal at every subsequent prefix.
The logit-space view in Section~\ref{sec:unified_view} formalizes how these context updates determine prefix-dependent decoding interventions.
Test-time compute can also be spent on parallel sampling, including self-consistency~\citep{wang2022self} and best-of-$N$ with a verifier or judge~\citep{Cobbe2021TrainingVT,brown2024large}, or sequential refinement~\citep{snell2024scaling}.
Our experiments compare EMO with context-only refinement and, separately, with SC/BoN at matched target-model generation budgets (Section~\ref{sec:compute}).

\section{Method}
\label{sec:method}

EMO couples context refinement and decoding interventions through feedback on generated responses.
Context updates induce question-specific changes in token preferences; steering converts these changes into controlled logit corrections at each decoding prefix.
We first relate the two updates in logit space, then describe the algorithm and the conditions for target alignment.

\subsection{A Unified View: Optimization as Logits Steering}
\label{sec:unified_view}

Let $\xi=(\boldsymbol{\theta},C)$ denote model parameters and system context jointly, and let $\mathcal V$ be the token vocabulary. The response distribution is
$P_\xi(y\mid q)=\prod_t P_\xi(y_t\mid q,y_{<t})$, where $q$ is a question and $y_t,y_{<t}$ denote token $t$ and its prefix in response $y$.
Numerical vectors use bold lowercase and matrices bold uppercase; scalars are italic.
Distributions retain uppercase symbols, interpreted as probability columns in matrix operations.

\paragraph{A shared alignment objective.}
\label{sec:unified_nll}
\label{sec:unified_kl}
SFT minimizes reference-response negative log-likelihood (NLL) over $\boldsymbol{\theta}$ with $C$ fixed.
TextGrad~\citep{yuksekgonul2025textgrad} instead refines text from loss feedback with $\boldsymbol{\theta}$ frozen; when instantiated with reference NLL, it targets the same objective through $C$.

\begin{lemma}[Unified Objective: NLL--KL Equivalence]
\label{lem:unified_objective}
With one reference response per question,
\begin{equation}
 \mathcal L(\xi)
 =\mathbb E_{(q,y^\star)\sim\mathcal D}[-\log P_\xi(y^\star\mid q)]
 =\mathbb E_{q\sim\mathcal D}\mathrm{KL}\!\left(
 \hat P_{\mathrm{target}}(\cdot\mid q)\,\|\,P_\xi(\cdot\mid q)\right),
 \label{eq:unified_objective}
\end{equation}
where $\mathcal D$ is the empirical distribution of question--reference pairs, $y^\star=y^\star(q)$ is the reference, $\mathcal L$ is the mean NLL, and $\hat P_{\mathrm{target}}(\cdot\mid q)=\delta_{y^\star(q)}$ is a point mass at that reference.
\end{lemma}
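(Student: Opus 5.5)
The plan is to compute the KL divergence on the right-hand side of \eqref{eq:unified_objective} directly, one question at a time, and then average over $\mathcal D$. Fix a question $q$ with reference $y^\star=y^\star(q)$. Treat $\hat P_{\mathrm{target}}(\cdot\mid q)=\delta_{y^\star}$ as a distribution on the (countable) space of finite token sequences over $\mathcal V$. Write the KL divergence as a sum over responses:
\begin{equation*}
\mathrm{KL}\!\left(\delta_{y^\star}\,\|\,P_\xi(\cdot\mid q)\right)=\sum_{y}\delta_{y^\star}(y)\log\frac{\delta_{y^\star}(y)}{P_\xi(y\mid q)}.
\end{equation*}
Every term with $y\neq y^\star$ vanishes under the convention $0\log(0/p)=0$. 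The surviving term is $1\cdot\log(1/P_\xi(y^\star\mid q))=-\log P_\xi(y^\star\mid q)$. Equivalently, one can use the decomposition $\mathrm{KL}(P\|Q)=-H(P)-\mathbb E_P\log Q$ and note that a point mass has zero entropy, so only the cross-entropy term remains.

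Next, average over questions. Because $\mathcal D$ is the empirical distribution and there is exactly one reference per question, $y^\star$ is a deterministic function of $q$. Hence the expectation over pairs $(q,y^\star)\sim\mathcal D$ coincides with the expectation over the $q$-marginal of $\mathcal D$. This is the only place the single-reference hypothesis is used. With several references one would instead obtain the empirical mixture of point masses plus an entropy constant, and the two sides would differ by that constant. Averaging the per-question identity then gives $\mathcal L(\xi)$ on both sides. Optionally, one can expand $-\log P_\xi(y^\star\mid q)=-\sum_t\log P_\xi(y^\star_t\mid q,y^\star_{<t})$ using the autoregressive factorization. This is not needed for the equality, but it links the identity to the token-level logits used later.

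The main obstacle is bookkeeping rather than any real difficulty. One issue is the degenerate case $P_\xi(y^\star\mid q)=0$. There both sides equal $+\infty$ under the standard conventions, so the identity still holds in the extended reals; for softmax-parameterized models this case cannot occur anyway. The other issue is making sure the response space is one on which $P_\xi(\cdot\mid q)$ is a genuine probability distribution, for example sequences terminated by an end-of-sequence token. That guarantees the KL divergence is well defined.
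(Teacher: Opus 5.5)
Your proposal is correct and matches the paper's proof: the point-mass target reduces the KL sum to the single term $-\log P_\xi(y^\star\mid q)$, and averaging over the empirical question distribution gives the identity. The paper likewise notes that, for a general fixed target, the two objectives differ by the target entropy. Your additional remarks on the zero-probability case and on how the single-reference hypothesis reduces the pair expectation to the $q$-marginal are harmless refinements that the paper leaves implicit.
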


\begin{proof}
The target assigns probability one to $y^\star(q)$, so its KL is $\log[1/P_\xi(y^\star\mid q)]$.
Averaging over questions gives Eq.~\ref{eq:unified_objective}.
For a general fixed target, cross-entropy differs from target KL only by the target entropy (Appendix~\ref{app:objective09}).
\end{proof}

This identity provides a common objective, not identical feasible distributions or update directions for parameter and context optimization.
At test time, judge feedback provides a surrogate signal for this unavailable reference-based objective; textual gradients encode natural-language criticism rather than derivatives of strings.

\paragraph{A shared execution form.}
\label{sec:unified_steering}
Consider a shared history $h=(q,y_{<t})$, with finite logits $\mathbf{z}$, model conditional $P=\mathrm{Softmax}(\mathbf{z})$, and target conditional $Q=\hat P_{\mathrm{target}}^t(\cdot\mid h)$ at a target-supported prefix.

\begin{lemma}[Unified Steering Form]
\label{lem:unified_steering}
Any parameter or context update $\xi\to\xi^+$ induces an exact additive correction $\Delta \mathbf{z}=\mathbf{z}_{\xi^+}-\mathbf{z}_\xi$, yielding $\mathrm{Softmax}(\mathbf{z}_\xi+\Delta \mathbf{z})$ at each history.
In unrestricted logit space, gradient descent on the conditional target KL gives
\begin{equation}
 \hat{\mathbf{z}}=\mathbf{z}+\eta\boldsymbol{\delta}^*,\qquad
 \boldsymbol{\delta}^*=\mathbf J_P\frac{Q}{P}=Q-P,
 \qquad \mathbf J_P=\mathrm{diag}(P)-PP^\top,
\label{eq:steering_update}
\end{equation}
where $\hat{\mathbf{z}}$ contains updated logits, $\eta>0$ is an analysis step size, $\boldsymbol{\delta}^*$ is the ideal descent direction, and $\mathbf J_P$ is the softmax Jacobian. Division is componentwise; $\mathrm{diag}(P)$ is the diagonal probability matrix.
If $Q\ne P$, every sufficiently small $\eta>0$ decreases the conditional KL.
\end{lemma}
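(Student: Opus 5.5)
The proof splits into three parts: the additive form, the gradient computation, and the descent claim. We treat each at a fixed shared history $h=(q,y_{<t})$.

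\textbf{Additive form.} Since both $\xi$ and $\xi^+$ produce finite logit vectors $\mathbf z_\xi,\mathbf z_{\xi^+}\in\mathbb R^{|\mathcal V|}$ at $h$, we define $\Delta\mathbf z=\mathbf z_{\xi^+}-\mathbf z_\xi$. Then trivially $\mathrm{Softmax}(\mathbf z_\xi+\Delta\mathbf z)=\mathrm{Softmax}(\mathbf z_{\xi^+})=P_{\xi^+}(\cdot\mid h)$. This identity is exact and holds whether the update is to $\boldsymbol\theta$ or to $C$, because in either case only the logits at $h$ enter the conditional.

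\textbf{Gradient.} Write $F(\mathbf z)=\mathrm{KL}(Q\,\|\,\mathrm{Softmax}(\mathbf z))=\sum_v Q_v\log Q_v-\sum_v Q_v\log P_v$. We have $\partial\log P_v/\partial z_u=\mathbb 1[u=v]-P_u$, so $\nabla_{\mathbf z}F=-(Q-P)=P-Q$, using $\sum_v Q_v=1$. This gives the chain-rule form $\nabla F=-\mathbf J_P^\top(Q/P)$, where $Q/P$ is the gradient of $\sum_v Q_v\log P_v$ with respect to $P$ and $\mathbf J_P=\mathrm{diag}(P)-PP^\top$ is the (symmetric) softmax Jacobian. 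We also verify directly that $\mathbf J_P(Q/P)=Q-P(P^\top Q/P)=Q-P\sum_v Q_v=Q-P$. Division is well defined because finite logits give $P_v>0$ for all $v$. A gradient step with step size $\eta$ therefore yields $\hat{\mathbf z}=\mathbf z+\eta\boldsymbol\delta^*$ with $\boldsymbol\delta^*=Q-P$.

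\textbf{Descent for small $\eta$.} $F$ is smooth ($C^\infty$) in $\mathbf z$ on all of $\mathbb R^{|\mathcal V|}$. The directional derivative is $\frac{d}{d\eta}F(\mathbf z+\eta\boldsymbol\delta^*)\big|_{\eta=0}=\langle\nabla F,\boldsymbol\delta^*\rangle=-\|Q-P\|_2^2$, which is strictly negative when $Q\neq P$. By Taylor's theorem, $F(\mathbf z+\eta\boldsymbol\delta^*)=F(\mathbf z)-\eta\|Q-P\|^2+O(\eta^2)$. Hence there exists $\eta_0>0$ such that every $\eta\in(0,\eta_0)$ strictly decreases the conditional KL.

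The only step needing care is this last one, if an explicit $\eta_0$ is wanted. One can bound the Hessian: $\nabla^2F=\mathbf J_P$, whose operator norm is at most $\max_v P_v\le 1$ (in fact $\le 1/2$). This bound is uniform in $\mathbf z$, so $F$ is $1$-smooth, and the descent lemma gives decrease for all $\eta\in(0,2)$. Note that the target $Q$ may have zeros (e.g., the point mass of Lemma~\ref{lem:unified_objective}); this causes no problem, since only $P>0$ is required.
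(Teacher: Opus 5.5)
Your proof is correct and follows essentially the same route as the paper: the additive form holds by the definition of $\Delta\mathbf z$, the chain rule with the symmetric Jacobian $\mathbf J_P$ gives $-\nabla_{\mathbf z}\mathrm{KL}(Q\|P)=\mathbf J_P(Q/P)=Q-P$, and the directional derivative $-\|Q-P\|_2^2<0$ yields local decrease by differentiability. Your extra Hessian bound ($\nabla^2_{\mathbf z}F=\mathbf J_P$ with operator norm at most $1$, which gives strict decrease for every $\eta\in(0,2)$) is a valid sharpening that the paper does not state for this lemma; the paper uses a similar variance-based curvature bound only in the appendix's quantitative step-size interval for Theorem~\ref{thm:monotonicity}.
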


\begin{proof}
The additive realization follows by substituting the logit difference.
Since $\nabla_P\mathrm{KL}(Q\|P)=-Q/P$ and $\mathbf J_P$ is symmetric, the chain rule gives $-\nabla_{\mathbf z}\mathrm{KL}(Q\|P)=\mathbf J_P(Q/P)=Q-P$.
The directional derivative along $\boldsymbol{\delta}^*$ is therefore $-\|Q-P\|_2^2<0$ when $Q\ne P$; differentiability gives the stated local decrease.
\end{proof}

Parameter gradients propagate through the model Jacobian, whereas textual rewrites induce finite logit differences; neither generally follows the direct-logit direction $Q-P$ (Appendix~\ref{app:steering09}).
Since $Q$ is unavailable, EMO constructs a computable direction from the context-induced distribution contrast.

\subsection{EMO: Alternating Context Optimization and Logits Steering}
\label{sec:emopt}

EMO alternates \textbf{context estimation (E-step)} and \textbf{distribution refinement (M-step)} per question.
The EM analogy is structural: textual feedback is not a posterior expectation, and steering does not maximize a specified common likelihood.

\subsubsection{E-Step: Context Optimization via Textual Gradients}
\label{sec:estep}

For each question, initialize $C_0$ to the shared system context and generate $R_0=\mathrm{Generate}(\boldsymbol{\theta},C_0,q)$ using ordinary target-model decoding.
Judge feedback guides the context update:
\begin{equation}
 C_{k+1}=\mathrm{Opt}\!\left(C_k,g_C^{(k)}\right),\qquad
 g_C^{(k)}=\mathrm{Backward}(C_k,q,R_k,F_k),
\label{eq:estep}
\end{equation}
where $k$ indexes rounds, $R_k$ is the current response, and $F_k$ is feedback from a judge without gold-answer access.
$\mathrm{Backward}$ translates the judge's criticism of $R_k$ into suggested changes to $C_k$, producing the textual gradient $g_C^{(k)}$; $\mathrm{Opt}$ applies this guidance to construct $C_{k+1}$.
Subsequent rounds use the preceding M-step's steered response as $R_k$.
Rewrites update the domain role, reasoning strategies, pitfalls, and output requirements while keeping the question fixed.
Feedback and rewrite prompts forbid final-answer disclosure; a pattern filter rejects suspected answer-bearing rewrites by setting $C_{k+1}=C_k$.
Target-KL improvement is an analysis assumption (Section~\ref{sec:theory}), not guaranteed by the textual update.

\subsubsection{M-Step: Distribution Contrast as Logits Steering}
\label{sec:mstep}

At each decoding step, evaluate $P_{\mathrm{opt}}=P_{\boldsymbol{\theta}}(\cdot\mid C_{k+1},q,y_{<t})$ and $P_{\mathrm{base}}=P_{\boldsymbol{\theta}}(\cdot\mid C_k,q,y_{<t})$ on the same generated prefix.
Here ``base'' denotes the preceding context $C_k$, not the initial context $C_0$; round, question, and prefix indices are suppressed.
Both streams consume the newly generated prefix rather than their own independently sampled responses, isolating the effect of the context revision.
Following SVDecode~\citep{hu2026distribution}, the contrast yields:
\begin{equation}
 \boldsymbol{\delta}_t=\mathbf J_{P_{\mathrm{opt}}}\log\frac{P_{\mathrm{opt}}}{P_{\mathrm{base}}}
 =\nabla_{\mathbf{z}_t^{\mathrm{opt}}}\mathrm{KL}(P_{\mathrm{opt}}\|P_{\mathrm{base}}),
 \label{eq:mstep_delta}
\end{equation}
where $\boldsymbol{\delta}_t$ is the raw steering vector, $\mathbf{z}_t^{\mathrm{opt}}$ contains logits under $C_{k+1}$, and $\mathbf J_{P_{\mathrm{opt}}}$ is the softmax Jacobian; differentiation holds $P_{\mathrm{base}}$ fixed.
Logarithms and ratios act componentwise.
For nonzero $\boldsymbol{\delta}_t$, an unmasked positive step locally increases divergence from the preceding-context distribution; this alone does not ensure target alignment.

\paragraph{From context updates to steering.}
At a shared prefix, context updates give:
\begin{equation}
 \Delta \mathbf{z}_t=\mathbf{z}_t^{\mathrm{opt}}-\mathbf{z}_t^{\mathrm{base}},\qquad
 \boldsymbol{\delta}_t=\mathbf J_{P_{\mathrm{opt}}}\Delta \mathbf{z}_t,
\label{eq:context_displacement}
\end{equation}
where $\mathbf{z}_t^{\mathrm{base}}$ contains logits under $C_k$ and $\Delta\mathbf{z}_t$ is the context-induced displacement.
This identity follows because $\log(P_{\mathrm{opt}}/P_{\mathrm{base}})$ differs from $\Delta \mathbf{z}_t$ only by a constant across tokens, which the softmax Jacobian annihilates.
The Jacobian centers and probability-weights the displacement from Lemma~\ref{lem:unified_steering}.
Writing $r_t(v)=\log P_{\mathrm{opt}}(v)-\log P_{\mathrm{base}}(v)$ gives the implementation directly:
\begin{equation}
 \delta_t(v)=P_{\mathrm{opt}}(v)\left[r_t(v)-\sum_{w\in\mathcal V}P_{\mathrm{opt}}(w)r_t(w)\right],
 \label{eq:centered_contrast_main}
\end{equation}
where $v,w\in\mathcal V$ index tokens and $r_t$ is the context log-ratio.
A positive correction requires an above-average log-ratio, not merely $P_{\mathrm{opt}}(v)>P_{\mathrm{base}}(v)$.
Using log-softmax values, this expression costs $O(|\mathcal V|)$ per prefix without a dense Jacobian or parameter backpropagation (Appendix~\ref{app:contrast09}).

\paragraph{Confidence-constrained decoding.}
To restrict the contrast signal to tokens supported by the optimized context, set
$\mathbb I_t(v)=\mathbf 1\{P_{\mathrm{opt}}(v)\geq\alpha\max_{v'}P_{\mathrm{opt}}(v')\}$ and apply
\begin{equation}
 \hat\delta_t(v)=\mathbb I_t(v)\delta_t(v)
 +(1-\mathbb I_t(v))\lambda,
 \qquad \alpha\in(0,1],\quad\lambda=-100,
\label{eq:confidence_mask}
\end{equation}
where $v,v'$ are tokens, $\mathbb I_t$ indicates whether a token passes the relative threshold $\alpha$, $\lambda$ penalizes exclusion, and $\delta_t(v),\hat\delta_t(v)$ are entries of $\boldsymbol{\delta}_t,\hat{\boldsymbol{\delta}}_t$.
Increasing $\alpha$ can only shrink the retained set, which always includes a maximizer; excluded tokens receive finite penalties.
The constrained vector then corrects the logits used for greedy decoding or sampling:
\begin{equation}
 \tilde{\mathbf{z}}_t=\mathbf{z}_t^{\mathrm{opt}}+\mu\hat{\boldsymbol{\delta}}_t,\qquad
 y_t=\mathrm{Decode}\!\left(\mathrm{Softmax}(\tilde{\mathbf{z}}_t)\right),
 \label{eq:mstep_logits}
\end{equation}
where $\tilde{\mathbf{z}}_t$ contains corrected logits, $\mu\geq0$ scales the constrained vector, and $\mathrm{Decode}$ selects $y_t$ greedily or by sampling.
The defaults are $(\mu,\alpha)=(2.4,0.1)$; excluded tokens receive the additive correction $\mu\lambda$.
At $\mu=0$, the method reduces to context-only optimization.
Both contrast and masking are recomputed at each prefix, before sampling temperature or truncation is applied.

\paragraph{Closed-loop execution.}
\label{sec:algorithm}
Algorithm~\ref{alg:emopt} performs $K$ optimization rounds, generating at most $T$ tokens per response and stopping at the end-of-sequence (EOS) token.
Feedback always concerns the most recent response; the highest-scoring candidate is used only for final selection.
It returns the earliest response maximizing the stored score $a_i=\mathcal J(q,R_i)$, where $\mathcal J$ is the judge and $i\in\{0,\ldots,K\}$ indexes candidates.
Two context-conditioned streams maintain separate key--value caches while sharing the generated prefix.
Thus $K=4$ yields five candidates and nine target-model generation equivalents; this excludes judge/rewrite costs and response-length differences and does not imply equal latency.

\subsection{Theoretical Analysis}
\label{sec:theory}

Fix a question, the contexts $C_k,C_{k+1}$, a finite vocabulary, and a maximum horizon $T$.
Let $Q$ now denote a fixed target \emph{sequence} distribution, $Q_t(\cdot\mid h)$ its conditionals, and $\widetilde P_\mu$ the sequence distribution induced by Eq.~\ref{eq:mstep_logits} before temperature scaling or truncation.
Assume finite logits at nonterminal histories; after EOS, use shared deterministic padding with zero steering.

\begin{assumption}[TextGrad Oracle]
\label{ass:oracle}
The context update improves the target objective: $\mathrm{KL}(Q\|P_{\boldsymbol{\theta},C_{k+1}})<\mathrm{KL}(Q\|P_{\boldsymbol{\theta},C_k})$.
\end{assumption}

\begin{assumption}[Faithful Constrained Steering]
\label{ass:steering}
For fixed $\alpha$, the actual masked direction has positive target-weighted alignment:
\begin{equation}
 \Gamma=\sum_{t=1}^{T}\mathbb E_{Y_{<t}\sim Q}
 \left\langle Q_t(\cdot\mid h)-P_{\mathrm{opt}}(\cdot\mid h),\hat{\boldsymbol{\delta}}_t(h)\right\rangle>0.
 \label{eq:alignment_main}
\end{equation}
Here histories follow $Q$, and $\Gamma$ aggregates alignment across positions; individual histories need not all improve.
\end{assumption}

\begin{theorem}[Conditional One-Round Improvement]
\label{thm:monotonicity}
Under the stated finite-horizon setting and Assumptions~\ref{ass:oracle}--\ref{ass:steering}, every sufficiently small $\mu>0$ gives $\mathrm{KL}(Q\|\widetilde P_\mu)<\mathrm{KL}(Q\|P_{\boldsymbol{\theta},C_{k+1}})<\mathrm{KL}(Q\|P_{\boldsymbol{\theta},C_k})$.
\end{theorem}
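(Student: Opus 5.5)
The second inequality is exactly Assumption~\ref{ass:oracle}, so everything reduces to showing $\mathrm{KL}(Q\|\widetilde P_\mu)<\mathrm{KL}(Q\|\widetilde P_0)$ for small $\mu>0$. Here I use that at $\mu=0$ Eq.~\ref{eq:mstep_logits} gives $\tilde{\mathbf z}_t=\mathbf z_t^{\mathrm{opt}}$ at every history, so $\widetilde P_0=P_{\boldsymbol\theta,C_{k+1}}$. The plan is a first-order argument in $\mu$: define $f(\mu)=\mathrm{KL}(Q\|\widetilde P_\mu)$, show $f$ is differentiable at $0$ with $f'(0)=-\Gamma$, and conclude from $\Gamma>0$ (Assumption~\ref{ass:steering}) that $f(\mu)<f(0)$ for all sufficiently small $\mu>0$.

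\textbf{Step 1 (chain-rule decomposition).} With a finite vocabulary and horizon $T$, and deterministic padding after EOS, the KL decomposes into conditional KLs along $Q$-histories:
\[
f(\mu)=\sum_{t=1}^{T}\mathbb E_{Y_{<t}\sim Q}\,\mathrm{KL}\!\left(Q_t(\cdot\mid h)\,\|\,\mathrm{Softmax}(\mathbf z_t^{\mathrm{opt}}(h)+\mu\hat{\boldsymbol\delta}_t(h))\right).
\]
Post-EOS histories contribute zero because padding is shared and steering vanishes there. The key observation is that the histories are drawn from $Q$, not from $\widetilde P_\mu$. Moreover, $\mathbf z_t^{\mathrm{opt}}(h)$ and $\hat{\boldsymbol\delta}_t(h)$ are functions of the fixed prefix $h$ and the contexts $C_k,C_{k+1}$ only. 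Hence they do not depend on $\mu$, and the only $\mu$-dependence is inside the softmax. If $f(0)$ is infinite the KL ordering is degenerate, so I restrict to the target-supported setting in which all relevant terms are finite; finite logits at nonterminal histories then make every term finite for all $\mu$.

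\textbf{Step 2 (per-history derivative).} For fixed $h$, the map $\mu\mapsto \mathrm{KL}(Q_t\|\mathrm{Softmax}(\mathbf z+\mu\hat{\boldsymbol\delta}))$ is smooth, since it is log-sum-exp minus a linear term plus a constant. The computation in Lemma~\ref{lem:unified_steering} shows that its gradient in $\mathbf z$ is $P-Q_t$, evaluated at $P=P_{\mathrm{opt}}$ when $\mu=0$. Therefore its derivative at $\mu=0$ is $\langle P_{\mathrm{opt}}-Q_t,\hat{\boldsymbol\delta}_t\rangle$. The mask penalty $\lambda=-100$ is a finite entry of $\hat{\boldsymbol\delta}_t$, so it causes no issue.

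\textbf{Step 3 (sum and conclude).} Because the histories form a finite set and there are finitely many positions, the expectation is a finite sum. Differentiation therefore commutes with the sum, and $f'(0)=-\Gamma<0$. It follows that there is a $\mu_0>0$ with $f(\mu)<f(0)$ for all $\mu\in(0,\mu_0)$. Chaining this with Assumption~\ref{ass:oracle} gives the stated strict inequalities.

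\textbf{Main obstacle.} The delicate step is Step 1: justifying that the sequence KL is exactly the $Q$-weighted sum of conditional KLs and that nothing else varies with $\mu$. This requires careful handling of EOS and variable-length responses via the padding convention. It also requires ensuring that histories with $Q$-probability zero, which may be reached under $\widetilde P_\mu$, do not matter; they do not, since the expectation is under $Q$. Once this bookkeeping is settled, the remaining steps are a routine first-order Taylor argument.
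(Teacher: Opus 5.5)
Your proposal is correct and takes essentially the same route as the paper. Both arguments use the KL chain rule over $Q$-histories, the per-history derivative $\langle P_{\mathrm{opt}}-Q_t,\hat{\boldsymbol\delta}_t\rangle$ from Lemma~\ref{lem:unified_steering}, and interchange of differentiation with finite sums to get $f'(0)=-\Gamma<0$; they then combine $\widetilde P_0=P_{\boldsymbol\theta,C_{k+1}}$ with Assumption~\ref{ass:oracle}. The paper's appendix additionally bounds the curvature along the path, $0\le f''\le L$, to obtain the explicit interval $0<\mu<2\Gamma/L$, but the theorem as stated does not need that refinement.
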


\begin{proof}
At a fixed history, $\hat{\boldsymbol{\delta}}_t(h)$ does not depend on $\mu$.
The KL chain rule and the conditional gradient in Lemma~\ref{lem:unified_steering} give
\begin{equation}
 \left.\frac{d}{d\mu}\mathrm{KL}(Q\|\widetilde P_\mu)\right|_{\mu=0}
 =\sum_{t=1}^{T}\mathbb E_{Y_{<t}\sim Q}
 \left\langle P_{\mathrm{opt}}-Q_t,\hat{\boldsymbol{\delta}}_t\right\rangle=-\Gamma<0.
 \label{eq:sequence_derivative_main}
\end{equation}
Finite vocabulary and horizon justify differentiation under the sums.
Since $\widetilde P_0=P_{\boldsymbol{\theta},C_{k+1}}$, differentiability gives the first inequality; Assumption~\ref{ass:oracle} gives the second.
\end{proof}

Appendix~\ref{app:theory09} gives an explicit sufficient step-size interval and shows why context improvement alone does not imply faithful steering.
The guarantee concerns full softmax distributions and unsteered context baselines; it certifies neither the default $\mu=2.4$, monotonic accuracy, nor improvement between steered rounds.
Judge selection preserves the best stored score, not necessarily correctness (Appendix~\ref{app:selection09}).

\par\addvspace{\intextsep}
\noindent
\begin{minipage}{\linewidth}
\begin{minipage}[t]{0.49\textwidth}
\vspace{0pt}
\setlength{\intextsep}{0pt}
\begin{algorithm}[H]
\caption{EMO}
\label{alg:emopt}
\small
\begin{algorithmic}[1]
\Require Model $\boldsymbol{\theta}$, question $q$, context $C_0$, judge $\mathcal J$, rounds $K$, token limit $T$, $\mu$, $\alpha$
\State $R_0\gets\mathrm{Generate}(\boldsymbol{\theta},C_0,q)$
\State $a_0\gets\mathcal J(q,R_0)$
\For{$k=0,\ldots,K-1$}
 \State $F_k\gets\mathrm{Feedback}_{\mathcal J}(q,R_k)$
 \State $g_C^{(k)}\gets\mathrm{Backward}(C_k,q,R_k,F_k)$
 \State $C_{k+1}\gets\mathrm{Opt}(C_k,g_C^{(k)})$
 \State Set $C_{k+1}\gets C_k$ if the rewrite is rejected
 \State Initialize empty prefix $y_{<1}$
 \For{$t=1,\ldots,T$}
  \State Evaluate $P_{\mathrm{opt}},P_{\mathrm{base}}$ at $y_{<t}$
  \State Compute $\boldsymbol{\delta}_t$ by Eq.~(\ref{eq:mstep_delta})
  \State Apply $\alpha$-mask by Eq.~(\ref{eq:confidence_mask})
  \State $\tilde{\mathbf{z}}_t\gets \mathbf{z}_t^{\mathrm{opt}}+\mu\hat{\boldsymbol{\delta}}_t$
  \State $y_t\gets\mathrm{Decode}(\mathrm{Softmax}(\tilde{\mathbf{z}}_t))$
  \State Append $y_t$; stop on EOS
 \EndFor
 \State $R_{k+1}\gets$ generated response
 \State $a_{k+1}\gets\mathcal J(q,R_{k+1})$
\EndFor
\State \Return earliest highest-scoring $R_i$
\end{algorithmic}
\end{algorithm}
\end{minipage}\hfill
\begin{minipage}[t]{0.49\textwidth}
\vspace{0pt}
\centering
\includegraphics[width=\linewidth]{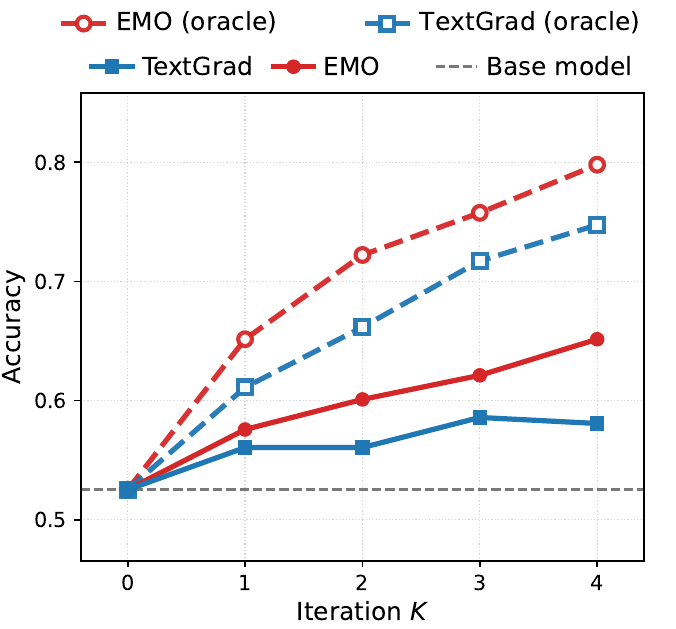}
\captionsetup{font=small,hypcap=false}
\captionof{figure}{Accuracy and oracle coverage versus iteration on Qwen3-14B/GPQA-Diamond. Solid markers show judge-selected accuracy; hollow markers show oracle coverage over $K+1$ candidates, including the initial response. These trajectories are empirical.}
\label{fig:iter_curve}
\end{minipage}
\end{minipage}
\par\addvspace{\intextsep}


\section{Experiments}
\label{sec:experiments}

\subsection{Experimental Setup}
\label{sec:setup}

\paragraph{Models and benchmarks.}
We evaluate eight models spanning 3B--32B parameters and two families: Qwen3-8B/14B/32B, Qwen2.5-7B, Qwen3.5-9B, DeepSeek-R1-0528-Qwen3-8B, Llama-3.1-8B-Instruct, and Llama-3.2-3B-Instruct, covering both thinking and non-thinking models (Appendix~\ref{app:models}).
We use six benchmarks covering multiple-choice and free-form formats: AIME-2025, AIME-2026, GPQA-Diamond, GSM8K, MATH-500, and MMLU-Pro (560 questions: 40 per category, sampling seed 42).
The main table averages the two AIME editions into one of five equally weighted task groups (Appendix~\ref{app:datasets}).

\paragraph{Judge and scoring.}
The external judge, Qwen3.5-122B-A10B (FP8), assigns scores from 0 to 10 without access to gold answers; feedback prompts forbid final-answer disclosure.
Judge prompting and decoding settings are detailed in Appendix~\ref{app:settings}.
Reported accuracy uses task-specific answer extraction and gold-based correctness rules, independently of judge scores.
Under \emph{judge-select}, the highest-scoring response among the initial generation and $K$ optimization rounds is selected, with ties resolved in favor of the earliest response.
We separately report \emph{oracle coverage}: the fraction of questions with at least one correct response in the candidate pool.

\paragraph{Baselines and protocol.}
We compare \textbf{Base} (zero-shot decoding) and \textbf{TextGrad} (our E-step without the M-step), adapting TextGrad from task-level to per-question optimization to match our protocol.
TextGrad and EMO share the initial context, baseline response, judge, and $K=4$ optimization rounds, with judge-select over five candidates.
EMO uses fixed defaults $\mu=2.4$ and $\alpha=0.1$ (Appendix~\ref{app:hyperparam}).
Main results use greedy target decoding; budget comparisons use temperature 0.6, top-$p$ 0.95, and seeds $\{0,1,2\}$ against self-consistency (SC) and best-of-$N$ (BoN) (Section~\ref{sec:compute}). Paired significance tests compare EMO at $N=9$ with Base and BoN@10 (Appendix~\ref{app:significance}).
Full decoding settings and token limits appear in Appendix~\ref{app:settings}.


\begin{table}[t]
\centering
\caption{\textbf{Main results.} Accuracy (\%) across 8 models and 6 benchmarks (AIME-2025/2026 merged into one task group), greedy decoding. Each group reports the zero-shot baseline (Base), context-only optimization (TextGrad), and EMO; the best per group is bolded.}
\label{tab:main}
\setlength{\tabcolsep}{1.8pt}
\small
\begin{tabular}{l ccc ccc ccc ccc ccc}
\toprule
& \multicolumn{3}{c}{AIME (\%)} & \multicolumn{3}{c}{GPQA-D (\%)} & \multicolumn{3}{c}{GSM8K (\%)} & \multicolumn{3}{c}{MATH-500 (\%)} & \multicolumn{3}{c}{MMLU-Pro (\%)} \\
\cmidrule(lr){2-4}\cmidrule(lr){5-7}\cmidrule(lr){8-10}\cmidrule(lr){11-13}\cmidrule(lr){14-16}
Model & Base & TG & EMO & Base & TG & EMO & Base & TG & EMO & Base & TG & EMO & Base & TG & EMO \\
\midrule
Llama-3.2-3B   & 0.0 & \B{6.7} & 3.3 & 26.8 & \B{40.4} & 34.3 & 71.9 & 86.5 & \B{87.1} & 39.8 & \B{61.0} & 60.4 & 34.8 & 45.7 & \B{53.0} \\
Qwen2.5-7B     & 5.0 & 11.7 & \B{16.7} & 32.8 & 42.9 & \B{45.5} & 87.8 & 90.7 & \B{91.7} & 70.0 & 78.0 & \B{79.2} & 54.8 & 61.6 & \B{65.5} \\
Llama-3.1-8B   & 0.0 & \B{3.3} & \B{3.3} & 29.3 & \B{43.9} & 38.9 & 82.1 & \B{89.6} & 89.4 & 43.2 & 65.4 & \B{65.6} & 45.4 & 53.9 & \B{63.9} \\
Qwen3-8B       & 20.0 & 43.3 & \B{46.7} & 49.0 & 54.5 & \B{56.6} & 91.4 & \B{92.7} & 92.3 & 75.0 & 82.0 & \B{83.0} & 63.6 & 64.8 & \B{67.5} \\
DR1-Qwen3-8B   & 8.3 & \B{18.3} & 16.7 & 46.0 & \B{48.0} & \B{48.0} & 82.2 & \B{88.1} & 87.9 & 56.2 & \B{66.6} & 64.0 & 62.0 & 67.0 & \B{68.2} \\
Qwen3.5-9B     & 25.0 & 26.7 & \B{45.0} & 39.4 & 41.9 & \B{45.5} & 89.0 & \B{90.6} & \B{90.6} & 48.2 & 50.0 & \B{50.4} & 62.9 & 63.6 & \B{65.0} \\
Qwen3-14B      & 20.0 & \B{33.3} & \B{33.3} & 52.5 & 58.1 & \B{65.2} & 91.7 & \B{92.0} & 91.5 & 76.2 & 83.2 & \B{83.8} & 71.4 & 73.6 & \B{75.2} \\
Qwen3-32B      & 18.3 & 43.3 & \B{48.3} & 59.6 & \B{64.6} & 62.1 & 91.4 & 91.9 & \B{92.0} & 78.6 & 83.8 & \B{86.0} & 71.4 & 72.3 & \B{74.3} \\
\midrule
Average        & 12.1 & 23.3 & \B{26.7} & 41.9 & 49.3 & \B{49.5} & 85.9 & 90.3 & \B{90.3} & 60.9 & 71.3 & \B{71.6} & 58.3 & 62.8 & \B{66.6} \\
\bottomrule
\end{tabular}
\end{table}


\subsection{Main Results}
\label{sec:main_results}

\paragraph{Broad gains across the model grid.}
Table~\ref{tab:main} separates overall gains over Base from incremental gains over context-only optimization.
\textbf{EMO improves over Base in 39 of 40 cells} (47 of 48 before merging the two AIME editions), averaging $+9.10$ percentage points across the 40 model--task-group pairs.
All eight models improve on average; EMO also exceeds TextGrad in 25 cells and ties in four, with a mean increment of $+1.53$ points.
Using Qwen3-8B as a common reference, we organize the analysis along four axes: model scale, generation, family, and thinking mode.

\paragraph{Scale: complementing model scaling.}
We first test whether adaptation benefits persist as target models scale.
Qwen3-14B with EMO exceeds Qwen3-32B Base on all five task groups, averaging $69.80$ versus $63.86$.
Likewise, optimized Llama-3.2-3B exceeds Llama-3.1-8B Base on every task group.
Larger models also retain substantial gains: Qwen3-32B improves by $8.68$ points on average and $30.0$ on AIME.
Accuracy headroom alone is insufficient to predict improvement: Qwen3 gains little on near-saturated GSM8K, yet Llama gains little on AIME despite near-zero baselines.
Mean gains across Qwen3-8B/14B/32B are non-monotonic ($9.42/7.44/8.68$): model size alone does not determine adaptation benefit.
Adaptation can close size-related gaps, but its benefit depends on whether feedback-guided generation can reach correct solutions.

\paragraph{Generation: sustained gains as models evolve.}
Across Qwen2.5-7B, Qwen3-8B, and Qwen3.5-9B, EMO yields mean gains of $9.64$, $9.42$, and $6.40$ points over Base.
The incremental gains over TextGrad are $2.74$, $1.76$, and $4.74$ points, respectively: the newest generation has the smallest overall gain but the largest M-step increment.
On Qwen3.5-9B, context-only optimization gains just $1.66$ points on average, whereas the full method gains $6.40$.
AIME makes this contrast concrete: Base, TextGrad, and EMO score $25.0$, $26.7$, and $45.0$.
Thus, limited gains from context rewrites need not imply limited scope for adaptation: steering can further exploit the distributional changes induced by those rewrites.

\paragraph{Family: narrowing gaps at matched model size.}
Qwen3-8B and Llama-3.1-8B allow a cross-family comparison at matched parameter scale.
On MMLU-Pro, their accuracy gap shrinks from $18.2$ points at Base to $3.6$ after applying EMO to both models.
The corresponding gaps shrink from $31.8$ to $17.4$ on MATH-500 and from $9.3$ to $2.9$ on GSM8K.
Steering contributes directly to the MMLU-Pro result: its increment over TextGrad is $10.0$ points for Llama versus $2.7$ for Qwen.
On Llama-3.1-8B/MATH-500, however, TextGrad already reaches $65.4$ versus EMO's $65.6$.
The narrowed family gaps therefore reflect task-dependent contributions from context refinement and steering.

\paragraph{Thinking mode: gains with and without explicit reasoning modes.}
The non-thinking Qwen2.5-7B, Llama-3.1-8B, and Llama-3.2-3B gain $9.64$, $12.22$, and $12.96$ points on average.
Qwen2.5-7B also exceeds TextGrad on all five task groups, showing that M-step gains do not require a native thinking mode.
Models with thinking modes also improve: the Qwen3 series gains $7.44$--$9.42$ points, and the reasoning-distilled DR1-Qwen3-8B gains $6.02$.
DR1 improves over Base on all five task groups, showing that reasoning distillation leaves scope for question-specific adaptation.
Yet its M-step increment is $-0.64$ points on average, with an additional gain only on MMLU-Pro ($68.2$ vs.\ $67.0$).
DR1 and Qwen3.5-9B therefore have similar overall gains ($6.02$ vs.\ $6.40$) but contrasting M-step contributions ($-0.64$ vs.\ $+4.74$).
One possible explanation is that distillation reshapes context-conditioned distributions and their sensitivity to logit interventions, altering the benefit of a fixed steering rule.
Establishing this mechanism requires matched pre-/post-distillation controls.

\paragraph{Where the M-step is most effective.}
Across these axes, \textbf{MMLU-Pro provides the most consistent steering gains}: all eight models exceed TextGrad, by $3.76$ points on average.
MMLU-Pro oracle coverage also improves over TextGrad for all eight models (mean $+9.6$), whereas mean oracle increments on MATH-500 and GSM8K are at most $1.4$ points (Appendix~\ref{app:iter_curves}).
At equal candidate counts, these results associate steering gains with broader coverage; mixed GPQA-D results show that multiple-choice format alone does not explain the effect.
Figure~\ref{fig:iter_curve} shows how the advantage develops across rounds on Qwen3-14B/GPQA-D.
From a shared $52.5$ baseline, EMO continues improving to $65.2$, while TextGrad levels off near $56$--$59$ and ends at $58.1$.
EMO's oracle curve also remains above TextGrad's after the first update, reaching $79.8$ at $K=4$.
Because oracle coverage does not depend on final answer selection, the curves indicate that improved candidate generation contributes to the $7.1$-point accuracy advantage.
We next analyze components, generation budgets, and selection (Section~\ref{sec:iteration_selection}).


\subsection{Component Ablations and Robustness}
\label{sec:ablations}

\begin{table}[t]
\noindent
\begin{minipage}[t]{0.48\textwidth}
\centering
\caption{\textbf{Component ablation} on Qwen3-8B (accuracy \%, greedy). ``w/o M-step'' reduces to context-only optimization (TextGrad); ``w/o Judge'' replaces the external judge with self-judging; ``w/o Both'' combines these changes.}
\label{tab:component}
\setlength{\tabcolsep}{3pt}
\small
\begin{tabular}{l ccccc}
\toprule
Dataset & Base & EMO & \shortstack{w/o\\M-Step} & \shortstack{w/o\\Judge} & \shortstack{w/o\\Both} \\
\midrule
AIME     & 20.0 & \B{46.7} & 43.3 & 31.7 & 25.0 \\
GPQA-D   & 49.0 & \B{56.6} & 54.5 & 51.0 & 54.5 \\
GSM8K    & 91.4 & 92.3 & \B{92.7} & 91.8 & 92.4 \\
MATH-500 & 75.0 & \B{83.0} & 82.0 & 78.6 & 78.0 \\
MMLU-Pro & 63.6 & \B{67.5} & 64.8 & 65.2 & 63.7 \\
\midrule
$\overline{\Delta}$ & -- & \B{+9.4} & +7.7 & +3.9 & +2.9 \\
\bottomrule
\end{tabular}
\end{minipage}%
\hfill
\begin{minipage}[t]{0.48\textwidth}
\centering
\caption{\textbf{Decoding-strategy ablation.} EMO gains over Base
(percentage points; single-seed results, seed~0) under greedy,
top-$p$ ($p=0.95$, $k=0$), and top-$k$ ($k=20$).
EMO and Base share decoding settings, with sampling temperature $0.6$.}
\label{tab:decoding}
\setlength{\tabcolsep}{3pt}
\small
\begin{tabular}{ll ccc}
\toprule
Dataset & Model & Greedy & top-$p$ & top-$k$ \\
\midrule
\multirow{2}{*}{\begin{tabular}[t]{@{}l@{}}GPQA-\\D\end{tabular}}
 & Qwen3-8B     & +7.6 & +4.5 & +6.6 \\
 & Llama-3.1-8B & +9.6 & +10.1 & +12.1 \\
\midrule
\multirow{2}{*}{\begin{tabular}[t]{@{}l@{}}MATH-\\500\end{tabular}}
 & Qwen3-8B     & +8.0 & +5.6 & +4.4 \\
 & Llama-3.1-8B & +22.4 & +20.6 & +21.6 \\
\midrule
\multirow{2}{*}{\begin{tabular}[t]{@{}l@{}}MMLU-\\Pro\end{tabular}}
 & Qwen3-8B     & +3.9 & +6.1 & +5.7 \\
 & Llama-3.1-8B & +18.6 & +21.8 & +25.2 \\
\bottomrule
\end{tabular}
\end{minipage}
\end{table}

\paragraph{External judging supplies the larger average contribution.}
Table~\ref{tab:component} varies M-step inclusion and judge choice jointly to distinguish their contributions on Qwen3-8B.
With the M-step retained, replacing self-judging with the external judge raises average accuracy by $5.56$ points; without the M-step, the increase is $4.74$ points.
With steering, the external judge's largest advantages are on AIME ($46.7$ vs.\ $31.7$) and GPQA-D ($56.6$ vs.\ $51.0$).
Since judge choice affects both feedback and response selection, these differences measure their combined effect.

\paragraph{Steering adds to context optimization under both judge choices.}
Holding judge choice fixed, adding the M-step improves average accuracy by $1.76$ points with the external judge and $0.94$ with self-judging.
These contributions interact: on Qwen3-8B/GPQA-D, steering gains $2.1$ points with the external judge but loses $3.5$ with self-judging.
On Qwen3-32B/GPQA-D, however, steering loses $2.5$ under either judge (Appendix~\ref{app:component_32b}).
These results show that steering effectiveness depends on both feedback quality and the target model, rather than on judge choice alone.

\paragraph{Robustness across decoding settings.}
Table~\ref{tab:decoding} tests whether gains depend on deterministic target decoding: EMO improves over Base in all 18 single-seed configurations spanning greedy, top-$p$, and top-$k$ decoding.
For Llama-3.1-8B on MMLU-Pro, the respective gains are $+18.6$, $+21.8$, and $+25.2$ points.
The full method's gains persist under stochastic token selection; this comparison does not isolate the M-step under sampling.

\paragraph{Confidence-constrained execution.}
The hyperparameter sweep tests correction strength and retained token support separately.
On Qwen3-8B/GPQA-D, accuracy stays at $54.5$--$56.6$ for $\mu=1.6$--$3.2$ at $\alpha=0.1$.
At $\mu=2.4$, thresholds $\alpha=0.1$--$0.4$ yield $56.6$--$58.6$, whereas the more permissive $\alpha=0.05$ reduces accuracy to $50.0$ (Appendix~\ref{app:hyperparam}).
Gains persist across the tested strengths but weaken at the most permissive threshold, supporting confidence-constrained execution.
A mask-only ablation would be needed to separate the contrast signal's contribution from the support constraint.


\subsection{Comparison at Matched Generation Budgets}
\label{sec:compute}

\begin{table}[t]
\centering
\caption{\textbf{Matched generation budgets} on MMLU-Pro (accuracy \%, mean$\pm$std over 3 seeds, top-$p$ sampling). iter/$N$: EMO rounds $K$ / target-model generation equivalents $N=2K+1$. \emph{Pass}: oracle coverage over $K+1$ candidates for EMO and $N$ for SC/BoN.}
\label{tab:compute}
\setlength{\tabcolsep}{4pt}
\small
\begin{tabular}{l c ccc cc}
\toprule
Model & iter/$N$ & SC & BoN & EMO & Pass (oracle) SC/BoN & Pass (oracle) EMO \\
\midrule
\multirow{5}{*}{Qwen3-8B}
 & 0/1 & 65.5{\tiny$\pm$1.2} & 65.5{\tiny$\pm$1.2} & 63.9{\tiny$\pm$1.3} & 65.5{\tiny$\pm$1.2} & 63.9{\tiny$\pm$1.3} \\
 & 1/3 & 66.8{\tiny$\pm$1.2} & 67.1{\tiny$\pm$1.0} & \B{67.9{\tiny$\pm$1.1}} & 75.2{\tiny$\pm$0.8} & \B{75.8{\tiny$\pm$1.5}} \\
 & 2/5 & 67.6{\tiny$\pm$0.8} & 68.2{\tiny$\pm$0.7} & \B{69.4{\tiny$\pm$0.6}} & 78.7{\tiny$\pm$0.3} & \B{80.3{\tiny$\pm$1.7}} \\
 & 3/7 & 67.9{\tiny$\pm$0.3} & 68.2{\tiny$\pm$0.6} & \B{70.0{\tiny$\pm$0.8}} & 80.6{\tiny$\pm$0.3} & \B{83.0{\tiny$\pm$2.2}} \\
 & 4/9 & 67.1{\tiny$\pm$1.2} & 68.4{\tiny$\pm$0.5} & \B{70.9{\tiny$\pm$0.7}} & 81.9{\tiny$\pm$0.5} & \B{84.6{\tiny$\pm$1.7}} \\
\midrule
\multirow{5}{*}{Llama-3.1-8B}
 & 0/1 & 45.1{\tiny$\pm$0.6} & 45.1{\tiny$\pm$0.6} & 45.1{\tiny$\pm$1.0} & 45.1{\tiny$\pm$0.6} & 45.1{\tiny$\pm$1.0} \\
 & 1/3 & 50.4{\tiny$\pm$1.5} & 55.1{\tiny$\pm$1.0} & \B{59.4{\tiny$\pm$2.6}} & 62.3{\tiny$\pm$1.1} & \B{66.9{\tiny$\pm$2.1}} \\
 & 2/5 & 49.8{\tiny$\pm$0.2} & 58.0{\tiny$\pm$1.1} & \B{63.7{\tiny$\pm$1.5}} & 68.9{\tiny$\pm$0.4} & \B{76.2{\tiny$\pm$2.1}} \\
 & 3/7 & 51.0{\tiny$\pm$1.1} & 61.0{\tiny$\pm$0.8} & \B{65.7{\tiny$\pm$1.1}} & 73.9{\tiny$\pm$0.1} & \B{80.2{\tiny$\pm$1.3}} \\
 & 4/9 & 51.4{\tiny$\pm$0.8} & 62.5{\tiny$\pm$0.5} & \B{68.2{\tiny$\pm$0.8}} & 76.4{\tiny$\pm$0.2} & \B{83.2{\tiny$\pm$1.1}} \\
\midrule
\multirow{5}{*}{Llama-3.2-3B}
 & 0/1 & 36.5{\tiny$\pm$1.7} & 36.5{\tiny$\pm$1.7} & 35.5{\tiny$\pm$1.3} & 36.5{\tiny$\pm$1.7} & 35.5{\tiny$\pm$1.3} \\
 & 1/3 & 40.5{\tiny$\pm$0.9} & 46.4{\tiny$\pm$0.8} & \B{53.2{\tiny$\pm$1.5}} & 54.0{\tiny$\pm$0.7} & \B{59.9{\tiny$\pm$1.5}} \\
 & 2/5 & 41.2{\tiny$\pm$0.8} & 49.8{\tiny$\pm$1.1} & \B{60.4{\tiny$\pm$1.5}} & 60.8{\tiny$\pm$1.3} & \B{71.5{\tiny$\pm$0.7}} \\
 & 3/7 & 40.9{\tiny$\pm$0.8} & 51.9{\tiny$\pm$0.7} & \B{64.1{\tiny$\pm$0.5}} & 65.1{\tiny$\pm$1.2} & \B{77.9{\tiny$\pm$0.6}} \\
 & 4/9 & 42.0{\tiny$\pm$0.5} & 53.5{\tiny$\pm$0.4} & \B{66.2{\tiny$\pm$0.5}} & 68.5{\tiny$\pm$1.3} & \B{81.6{\tiny$\pm$0.4}} \\
\bottomrule
\end{tabular}
\end{table}

\begin{figure}[t]
\centering
\begin{subfigure}[t]{0.49\textwidth}
\centering
\includegraphics[width=\linewidth]{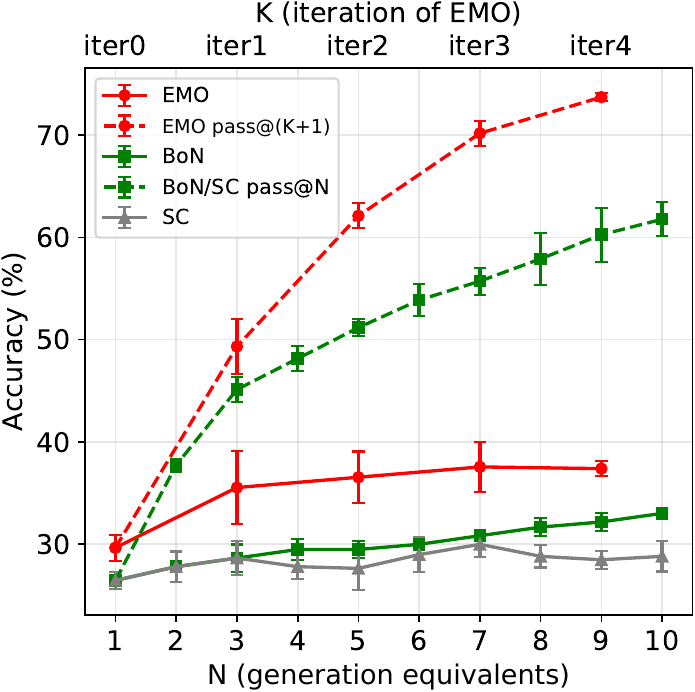}
\caption{Llama-3.1-8B / GPQA-D}
\label{fig:compute-a}
\end{subfigure}
\hfill
\begin{subfigure}[t]{0.49\textwidth}
\centering
\includegraphics[width=\linewidth]{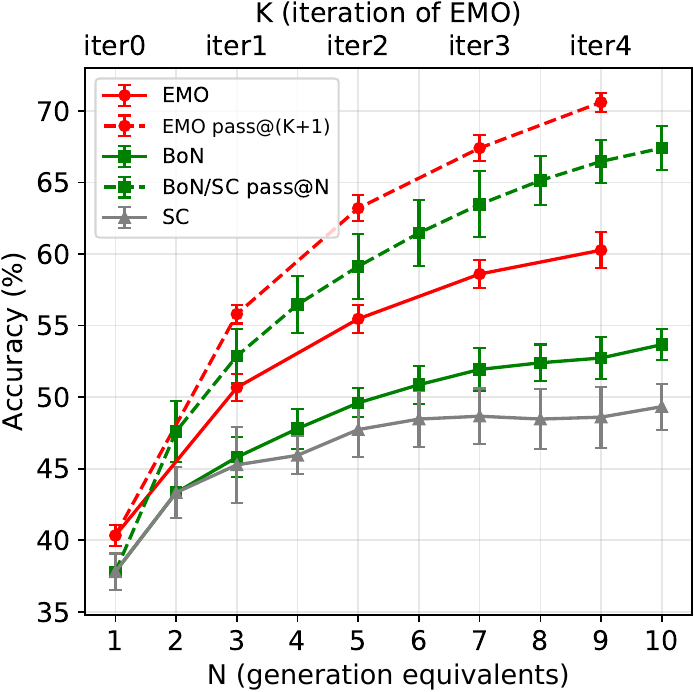}
\caption{Llama-3.2-3B / MATH-500}
\label{fig:compute-b}
\end{subfigure}
\caption{\textbf{Generation-budget scaling} under top-$p$ sampling (mean$\pm$std over three seeds). At $N=2K+1$, EMO has $K+1$ candidates; SC/BoN have $N$ independent samples. Dashed curves show oracle coverage.}
\label{fig:compute}
\end{figure}

\paragraph{Budget accounting.}\sloppy
We compare EMO with SC and judge-selected BoN under top-$p$ sampling, using three seeds (Table~\ref{tab:compute}, Figure~\ref{fig:compute}; full results in Appendix~\ref{app:compute_full}).
SC/BoN share a sample pool and match EMO's decoding settings, per-response token limits, and seeds; BoN also uses the same judge and tie-breaking rule.
The separately generated $N=1$ responses can differ in accuracy despite matching seed identifiers (e.g., $40.3$ for EMO vs.\ $37.8$ for SC/BoN on Llama-3.2-3B/MATH-500), so endpoint gaps compare the complete procedures rather than isolating the effect of later feedback rounds.
One initial generation and two context-conditioned streams per optimization round give EMO@$K$ a budget of $N=2K+1$ target-model generation equivalents, matched to $N$ independent samples for SC/BoN.
The EMO pool contains only $K+1$ candidates because the two streams jointly generate one response per round.
This accounting excludes judge/rewrite calls and response-length differences; total FLOPs and latency are not matched.

\paragraph{Clear advantages on Llama, task-dependent gains on Qwen.}\sloppy
At $N=9$, EMO exceeds BoN on all six Llama tasks: Llama-3.1-8B GPQA-D ($37.4$ vs.\ $32.2$), MATH-500 ($65.6$ vs.\ $60.8$), MMLU-Pro ($68.2$ vs.\ $62.5$), and Llama-3.2-3B GPQA-D ($36.5$ vs.\ $30.0$), MATH-500 ($60.3$ vs.\ $52.7$), MMLU-Pro ($66.2$ vs.\ $53.5$).
It also exceeds SC on all six.
The margin widens on the smaller Llama-3.2-3B, reaching $12.7$ points on MMLU-Pro.
On Llama-3.1-8B/MMLU-Pro, BoN already exceeds SC by $11.1$ points using the same sample pool: candidate availability alone does not ensure successful majority voting.
EMO then raises both coverage and selected accuracy beyond BoN.
For Qwen3-8B, EMO gains $2.5$ points over BoN on MMLU-Pro, but the endpoint means are close on GPQA-D ($60.3$ vs.\ $60.4$) and MATH-500 ($85.0$ vs.\ $84.9$).
At $N=9$, SC is strongest on Qwen3-8B/MATH-500 ($86.1$).
In the two near-ties on Qwen, EMO also lacks an oracle-coverage advantage (Appendix Table~\ref{tab:compute_pass}); its clearest accuracy gains coincide with expanded candidate coverage.

\subsection{Iteration Dynamics and Selection}
\label{sec:iteration_selection}

\paragraph{Early gains with task-dependent returns to further iterations.}
In the sampling experiments, the first two optimization rounds capture roughly $70$--$89\%$ of the final gain in five of six Qwen3-8B and Llama-3.1-8B model--task pairs (Appendix Table~\ref{tab:compute_emopt}); on Llama-3.2-3B the corresponding range is $65$--$81\%$.
On the Llama-3.2-3B/MATH-500 pair in Figure~\ref{fig:compute-b}, $K=2$ reaches $55.5$ accuracy ($76\%$ of the eventual gain over the $40.3$ initial accuracy), and the last two rounds add another $4.8$ points.
Qwen3-8B/GPQA-D is the exception: $K=2$ captures about $46\%$, and accuracy continues to improve through $K=4$.
These trajectories motivate task-dependent iteration budgets, with additional rounds remaining useful where early gains are limited.

\paragraph{Candidate coverage and answer selection impose different limits.}
At $N=9$, EMO's five-candidate oracle coverage exceeds that of nine independent samples in seven of nine settings (Appendix Table~\ref{tab:compute_pass}).
Figure~\ref{fig:compute-a} illustrates the largest coverage advantage, on Llama-3.1-8B/GPQA-D.
EMO already reaches $35.5$ accuracy at $K=1$ ($N=3$), exceeding BoN with ten samples ($33.0$).
Its dashed oracle curve separates further from independent sampling, reaching $73.7$ versus $60.3$ at $N=9$, despite using five rather than nine candidates.
However, the growing gap between EMO's dashed and solid curves exposes a selection bottleneck: from $K=1$ to $K=4$, coverage rises from $49.3$ to $73.7$, while selected accuracy rises only from $35.5$ to $37.4$.
At $N=9$, selected accuracy as a fraction of coverage is $50.7\%$ for EMO versus $53.4\%$ for BoN: the advantage comes from broader coverage despite a lower fraction realized by selection (Appendix~\ref{app:compute_full}).
Figure~\ref{fig:compute-b} shows a different pattern on Llama-3.2-3B/MATH-500: at $N=9$, EMO exceeds BoN in both coverage ($70.6$ vs.\ $66.5$) and selected accuracy ($60.3$ vs.\ $52.7$). The selected-to-covered fraction is also higher ($85.4\%$ vs.\ $79.2\%$), so the accuracy gap accompanies gains in both candidate coverage and the fraction of covered cases selected correctly.
At the other extreme, Llama-3.2-3B/AIME has oracle coverage of only $3.3$ and $6.7$ on the two editions (Appendix Table~\ref{tab:aime_split}), so better selection alone would yield limited gains.
These regimes call for stronger ranking when correct candidates exist and better generation when coverage is low.

\section{Conclusion}
\label{sec:conclusion}

EMO couples question-specific context optimization with logits steering, using each response to refine the context that guides the next generation.
Its logit-space analysis clarifies this connection and identifies conditions for one-round target-KL improvement.
Across eight models and six benchmarks, the results show that model scaling and reasoning distillation leave scope for adaptation, with steering benefits that depend on the model and task.
The broader implication is to treat generation and verification as coupled uses of test-time computation.
Feedback can make further attempts more productive, while reliable selection determines how much of their expanded coverage reaches the final answer.
This perspective motivates inference systems that adjust context revision, decoding strength, and verification effort jointly to the bottleneck encountered during reasoning.

\paragraph{Limitations and future work.}
External judging can introduce misleading feedback or miss correct candidates.
Calibrated ranking and independent checks of reasoning steps or tool outputs could improve verification; retrieval and multiple optimization trajectories could address low candidate coverage.
Distribution access and two model streams limit deployment, and generation-equivalent budgets omit judge/rewrite costs.
Smaller judges, selective M-step activation, and adaptive stopping could reduce overhead; evaluations should measure total tokens, FLOPs, and latency.
Text-only interfaces could use response-level refinement and reranking, subject to separate evaluation.
The one-round guarantee assumes successful context updates and faithful steering over full softmax distributions; it does not certify default settings or monotonic accuracy.
Context-update acceptance checks, adaptive steering strength, and fallback to context-only decoding could improve robustness, using observable signals such as verifier disagreement and score margins.
Evaluation is limited to reasoning benchmarks in two model families.
Alternative judges, multilingual tasks, long-context workflows, and tool-using agents would test transfer and error accumulation over longer adaptation cycles.

\bibliography{2026_conference}
\bibliographystyle{2026_conference}

\appendix
\newpage
\section{Proofs and Method Supplementary Details}
\label{app:method09}

\subsection{Proof of the Unified Objective}
\label{app:objective09}

For each question with reference response $y^\star$, define $Q(y\mid q)=\mathbf 1\{y=y^\star\}$.
Then
\begin{equation}
 \mathrm{KL}(Q\|P_\xi)
 =\sum_{y:Q(y\mid q)>0}Q(y\mid q)\log\frac{Q(y\mid q)}{P_\xi(y\mid q)}
 =-\log P_\xi(y^\star\mid q).
\end{equation}
Averaging over the empirical question distribution proves Lemma~\ref{lem:unified_objective}.
The equality holds independently of which component of $\xi=(\boldsymbol{\theta},C)$ is varied.
For a general fixed target, cross-entropy equals $\mathrm{KL}(Q\|P_\xi)+H(Q)$, and the entropy term is independent of either optimization variable.
Thus the minimizers of the two losses agree within each chosen feasible set; the parameter and context feasible sets need not have the same minimizer.

In the test-time setting, take the question distribution to be concentrated on the current question.
The target objective remains a theoretical reference: the gold response is supplied to neither the judge nor the textual optimizer.
The ideal context optimizer is defined operationally as producing a context with smaller target loss whenever an improving update is assumed to exist.
No differentiable structure on the space of strings is needed for that condition, and the objective identity does not assert that arbitrary textual feedback exactly differentiates NLL.

\subsection{Proof of the Unified Steering Form}
\label{app:steering09}
\label{app:equivalence}

\paragraph{Exact realization of either optimization update.}
At a fixed history, let $\mathbf{z}=\mathbf{z}_\xi(h)$ and $\mathbf{z}^+=\mathbf{z}_{\xi^+}(h)$ be the before/after logits.
Defining $\Delta \mathbf{z}=\mathbf{z}^+-\mathbf{z}$ immediately gives
$\mathrm{Softmax}(\mathbf{z}+\Delta \mathbf{z})=P_{\xi^+}(\cdot\mid h)$.
Equivalently, the log-ratio $\log(P_{\xi^+}/P_\xi)$ differs from $\Delta \mathbf{z}$ only by the scalar difference of log-normalizers.
Softmax ignores this token-independent constant.
For parameter optimization, the difference is $f_{\boldsymbol{\phi}}(C,h)-f_{\boldsymbol{\theta}}(C,h)$; for context optimization it is $f_{\boldsymbol{\theta}}(C',h)-f_{\boldsymbol{\theta}}(C,h)$.
Applying the equality at every history and multiplying the conditional probabilities reproduces the optimized autoregressive distribution.
For multiple optimization steps, the logit differences telescope at each fixed history, so the same statement applies to the entire resulting output change.

\paragraph{The ideal direct-logit gradient.}
For $P=\mathrm{Softmax}(\mathbf{z})$ and a fixed conditional target $Q$,
\begin{align}
 \nabla_P\mathrm{KL}(Q\|P)&=-Q/P,\\
 -\nabla_{\mathbf{z}}\mathrm{KL}(Q\|P)
 &=\mathbf J_P(Q/P)
 =Q-P\sum_v Q(v)=Q-P.
\end{align}
The division is componentwise; finite logits ensure strictly positive $P$.
The chain rule uses $\mathbf J_P^\top=\mathbf J_P$.
Softmax ensures that the updated probabilities remain normalized.
For $\boldsymbol{\delta}^*=Q-P$,
\begin{equation}
 \left.\frac{d}{d\eta}\mathrm{KL}\!\left(Q\|\mathrm{Softmax}(\mathbf{z}+\eta\boldsymbol{\delta}^*)\right)\right|_{\eta=0}
 =\langle P-Q,\boldsymbol{\delta}^*\rangle=-\|\boldsymbol{\delta}^*\|_2^2.
\end{equation}
When $Q\ne P$, differentiability gives strict improvement for all sufficiently small positive $\eta$.
When $Q=P$, the direction is zero and no strict improvement is claimed.
For a point-mass target, the reference token receives $1-P(y^\star)$ and other tokens receive $-P(v)$.

\paragraph{Which equivalence this establishes.}
The additive realization is exact for the output of either optimization process.
The ideal gradient formula concerns direct optimization over logits.
For a differentiable parameter logit map with Jacobian $\mathbf{G}=\partial \mathbf{z}/\partial\boldsymbol{\theta}$, a parameter gradient step instead induces
\begin{equation}
 \mathbf{z}_{\boldsymbol{\theta}^+}-\mathbf{z}_{\boldsymbol{\theta}}=-\eta \mathbf{G}\mathbf{G}^\top(P-Q)+O(\eta^2),
\end{equation}
assuming the map is twice continuously differentiable near the update.
This follows by applying the chain rule to the parameter loss and Taylor-expanding the updated logits.
For a training loss over multiple examples and histories, the analogous statement uses stacked logits, their Jacobian, and the full weighted loss gradient.
Textual updates induce finite differences without requiring a string derivative.
These observations establish a shared logit-space representation without identifying different optimizers' actual directions with $Q-P$.

\paragraph{Connection to reinforcement learning.}
At a fixed history, let $P$ be a positive reference policy, $A$ a fixed token advantage, and $\beta>0$.
For the unrestricted policy problem
$\max_\pi\{\langle\pi,A\rangle-\beta^{-1}\mathrm{KL}(\pi\|P)\}$,
define $Z=\sum_vP(v)e^{\beta A(v)}$ and $\pi^*(v)=P(v)e^{\beta A(v)}/Z$.
The objective equals $\beta^{-1}\log Z-\beta^{-1}\mathrm{KL}(\pi\|\pi^*)$, so its unique maximizer is $\pi^*$.
It therefore has the additive form $\pi^*=\mathrm{Softmax}(\mathbf{z}+\beta A)$.
This gives the policy-improvement connection under the stated fixed-history objective; it does not identify arbitrary parameterized RL updates with direct target-KL gradient descent.

\subsection{From the Context Contrast to the Steering Vector}
\label{app:contrast09}

The construction follows the probability-gradient-to-logit chain rule in SVDecode~\citep{hu2026distribution}.
Hold $P_{\mathrm{base}}$ fixed and differentiate at $P_{\mathrm{opt}}=\mathrm{Softmax}(\mathbf{z}^{\mathrm{opt}})$:
\begin{align}
 \nabla_{P_{\mathrm{opt}}}\mathrm{KL}(P_{\mathrm{opt}}\|P_{\mathrm{base}})
 &=\log(P_{\mathrm{opt}}/P_{\mathrm{base}})+\mathbf 1,\\
 \nabla_{\mathbf{z}^{\mathrm{opt}}}\mathrm{KL}(P_{\mathrm{opt}}\|P_{\mathrm{base}})
 &=\mathbf J_{P_{\mathrm{opt}}}\log(P_{\mathrm{opt}}/P_{\mathrm{base}}).
\end{align}
The constant term vanishes because $\mathbf J_{P_{\mathrm{opt}}}\mathbf 1=0$.
This proves Eq.~(\ref{eq:mstep_delta}); the equality defines the implemented vector, while its task-alignment interpretation remains conditional.
Writing $r(v)=\log P_{\mathrm{opt}}(v)-\log P_{\mathrm{base}}(v)$ gives the centered expression
\begin{equation}
 \delta_t(v)=P_{\mathrm{opt}}(v)
 \left(r(v)-\sum_wP_{\mathrm{opt}}(w)r(w)\right).
 \label{eq:centered_contrast09}
\end{equation}
Thus a positive correction requires a log-ratio above its optimized-distribution average.
The condition $P_{\mathrm{opt}}(v)>P_{\mathrm{base}}(v)$ alone is insufficient.
The expression is evaluated in linear vocabulary time using log-softmax values, without forming a dense Jacobian or backpropagating through model parameters.

For the unmasked path $P(u)=\mathrm{Softmax}(\mathbf{z}^{\mathrm{opt}}+u\boldsymbol{\delta}_t)$,
\begin{equation}
 \left.\frac{d}{du}\mathrm{KL}(P(u)\|P_{\mathrm{base}})\right|_{u=0}
 =\|\boldsymbol{\delta}_t\|_2^2.
\end{equation}
This explains the positive sign for local extrapolation in Eq.~(\ref{eq:mstep_delta}).
The negative-gradient convention in SVDecode would give the opposite local change for positive strength.
Neither sign determines target improvement without a target-alignment condition.
In particular, the log-ratio in Eq.~(\ref{eq:mstep_delta}) is an observable contrast, not an identity or a generally consistent estimate of $Q/P_{\mathrm{opt}}$ in Lemma~\ref{lem:unified_steering}.
After masking, the direction is $\hat{\boldsymbol{\delta}}_t$ and all target-alignment reasoning must use this actual vector.

\paragraph{Complementarity and sequential dependence.}
The steering operator supplies an execution rule, whereas feedback and context optimization supply task information.
For any fixed direction $\mathbf{d}$, the derivative of target KL at zero is $\langle P-Q,\mathbf{d}\rangle$, so its usefulness depends on the target.
EMO obtains its direction only after $C_{k+1}$ has been estimated; the pair $(C_k,C_{k+1})$ then determines the contrast at each prefix.
The resulting steered response drives the next context update.
This is the ordered, complementary relationship underlying the E/M terminology.
Classical EM monotonicity would require a specified latent-variable objective and corresponding expectation and maximization steps, which are not supplied by this structural correspondence alone.

\subsection{Proof of Conditional One-Round Improvement}
\label{app:theory09}

Fix a question and condition on the realized contexts $C_k,C_{k+1}$, including any randomness in the E-step.
Abbreviate the fixed sequence target by $Q$; it may be the point-mass reference target from Lemma~\ref{lem:unified_objective} or a distribution over multiple admissible responses.
Use a finite vocabulary $\mathcal V$, maximum horizon $T$, finite logits at nonterminal histories, and a fixed threshold $\alpha$.
After EOS, pad with shared deterministic EOS transitions and zero directions; their KL contributions are zero.
All distributions use this common finite-horizon convention.
The distributions below are full softmax distributions before temperature scaling or decoder truncation.
Let $\widetilde P_\mu$ denote the steered sequence distribution, so $\widetilde P_0=P_{\boldsymbol{\theta},C_{k+1}}$.
For a fixed history, $\hat{\boldsymbol{\delta}}_t(h)$ depends only on the contexts and $\alpha$, not on $\mu$.
Generated prefixes can change with $\mu$, but the direction at any specified prefix remains fixed; the proof differentiates this conditional distribution field, not a sampled decoding trajectory.
Only histories with positive probability under $Q$ contribute, so target conditionals at other histories may be defined arbitrarily.

\paragraph{The two conditions.}
The oracle E-step condition is
\begin{equation}
 \mathrm{KL}(Q\|P_{\boldsymbol{\theta},C_{k+1}})<\mathrm{KL}(Q\|P_{\boldsymbol{\theta},C_k}).
 \label{eq:estep_improvement}
\end{equation}
Faithful steering is made precise by requiring positive aggregate alignment of the actual masked direction with the ideal target descent direction:
\begin{equation}
 \Gamma=\sum_{t=1}^T\mathbb E_{Y_{<t}\sim Q}
 \left\langle Q_t(\cdot\mid h)-P_{\mathrm{opt}}(\cdot\mid h),\hat{\boldsymbol{\delta}}_t(h)\right\rangle>0.
 \label{eq:faithful_steering09}
\end{equation}
This condition includes the threshold and finite penalty from Eq.~(\ref{eq:confidence_mask}).
It permits some histories to have unfavorable directions, provided their target-weighted aggregate is favorable.

\paragraph{Conditional derivative and sequence composition.}
For $\widetilde P_\mu(\cdot\mid h)=\mathrm{Softmax}(\mathbf{z}_t^{\mathrm{opt}}+\mu\hat{\boldsymbol{\delta}}_t)$, differentiation gives
\begin{equation}
 \left.\frac{d}{d\mu}\mathrm{KL}(Q_t\|\widetilde P_\mu(\cdot\mid h))\right|_{\mu=0}
 =-\langle Q_t-P_{\mathrm{opt}},\hat{\boldsymbol{\delta}}_t\rangle.
\end{equation}
Expanding the sequence log-ratio into conditional log-ratios gives the KL chain rule:
\begin{equation}
 \mathrm{KL}(Q\|\widetilde P_\mu)
 =\sum_{t=1}^T\mathbb E_{Y_{<t}\sim Q}
 \mathrm{KL}\!\left(Q_t(\cdot\mid h)\|\widetilde P_\mu(\cdot\mid h)\right).
\end{equation}
The vocabulary and horizon are finite, so differentiation may be interchanged with these finite sums.
The derivative of sequence KL at $\mu=0$ is the negative of Eq.~(\ref{eq:faithful_steering09}), hence strictly negative.
By differentiability there is $\varepsilon>0$ such that every $0<\mu<\varepsilon$ gives
\begin{equation}
 \mathrm{KL}(Q\|\widetilde P_\mu)
 <\mathrm{KL}(Q\|P_{\boldsymbol{\theta},C_{k+1}})
 <\mathrm{KL}(Q\|P_{\boldsymbol{\theta},C_k}),
\end{equation}
where the second inequality is Eq.~(\ref{eq:estep_improvement}).
This proves Theorem~\ref{thm:monotonicity}.

\paragraph{A quantitative sufficient step-size interval.}
The local result can be made explicit using the curvature of sequence KL along the fixed steering direction.
Write $F(\mu)=\mathrm{KL}(Q\|\widetilde P_\mu)$ and $\mathbf d_t(h)=\hat{\boldsymbol{\delta}}_t(h)$.
Differentiating the conditional log-normalizer twice gives
\begin{align}
 F'(\mu)
 &=\sum_{t=1}^{T}\mathbb E_{Y_{<t}\sim Q}
 \left\langle\widetilde P_\mu(\cdot\mid h)-Q_t(\cdot\mid h),\mathbf d_t(h)\right\rangle,
 \label{eq:sequence_first_derivative09}\\
 F''(\mu)
 &=\sum_{t=1}^{T}\mathbb E_{Y_{<t}\sim Q}
 \operatorname{Var}_{V\sim\widetilde P_\mu(\cdot\mid h)}[d_t(h,V)]\geq0,
 \label{eq:sequence_curvature09}
\end{align}
where $V$ is a next-token random variable and $d_t(h,v)$ is the $v$th entry of $\mathbf d_t(h)$.
Thus $F$ is convex along this one-dimensional path, even though context optimization itself is discrete and need not be convex.
Define the range of the constrained correction and its target-weighted squared-range bound by
\begin{equation}
 \rho_t(h)=\max_{v\in\mathcal V}d_t(h,v)-\min_{v\in\mathcal V}d_t(h,v),
 \qquad L=\frac14\sum_{t=1}^{T}\mathbb E_{Y_{<t}\sim Q}\rho_t(h)^2.
 \label{eq:steering_curvature_bound09}
\end{equation}
For a random variable supported on $[a,b]$, its variance is at most $(b-a)^2/4$: subtracting the midpoint gives a second moment bounded by this quantity, and variance is no larger than that second moment.
Consequently, $0\leq F''(\mu)\leq L$ for every finite $\mu$.
The bound is finite because the vocabulary, horizon, logits, and penalty are finite.
Moreover, $\Gamma>0$ implies $L>0$: otherwise all directions at target-supported histories would be tokenwise constant, making their inner products with $Q_t-P_{\mathrm{opt}}$ zero.
Integrating the curvature bound and using $F'(0)=-\Gamma$ yields
\begin{equation}
 F(\mu)\leq F(0)-\mu\Gamma+\frac{L\mu^2}{2},
 \qquad \mu\geq0.
 \label{eq:quantitative_improvement09}
\end{equation}
Hence every $0<\mu<2\Gamma/L$ strictly improves over the unsteered updated-context distribution.
At $\mu=\Gamma/L$, the bound guarantees a decrease of at least $\Gamma^2/(2L)$.
Combined with Eq.~\ref{eq:estep_improvement}, this also gives improvement over the preceding context.
This sufficient interval depends on the unknown target through $\Gamma$ and $L$; it is an analytical guarantee, not an implementable tuning prescription or a certificate for the empirical default.

Convexity also clarifies the role of the alignment assumption.
If $\Gamma\leq0$, then $F'(0)\geq0$ and $F(\mu)\geq F(0)+\mu F'(0)\geq F(0)$ for all $\mu\geq0$.
Thus, for fixed contexts and threshold, positive aggregate alignment is necessary and sufficient for some positive strength to improve target KL over $P_{\boldsymbol{\theta},C_{k+1}}$ along this path.
This statement does not compare different context updates, thresholds, or steered rounds.

\paragraph{Role of the confidence constraint.}
The alignment contribution at a history is
\begin{equation}
 \langle Q_t-P_{\mathrm{opt}},\hat{\boldsymbol{\delta}}_t\rangle
 =\sum_{v:\mathbb I_t(v)=1}(Q_t(v)-P_{\mathrm{opt}}(v))\delta_t(v)
 +\lambda\sum_{v:\mathbb I_t(v)=0}(Q_t(v)-P_{\mathrm{opt}}(v)).
\end{equation}
Changing $\alpha$ changes both sets and can therefore change the sign of this contribution.
Writing $S_t=\{v:\mathbb I_t(v)=1\}$, the excluded-token contribution is $\lambda[Q_t(S_t^c)-P_{\mathrm{opt}}(S_t^c)]$.
Because $\lambda<0$, this term helps alignment when excluded tokens carry less target mass than model mass, and hurts when they carry more.
Relative model confidence is therefore a plausibility criterion, not a guarantee of target correctness.
The claim applies to the masked direction at the chosen $\alpha$, not merely the raw contrast.
The sufficient interval for $\mu$ depends on these directions and the unknown target; the proof does not certify the empirical default or yield a test-time tuning rule.
The oracle E-step condition does not by itself imply Eq.~(\ref{eq:faithful_steering09}).
Across rounds, the comparison distribution above is the preceding unsteered context distribution, not the preceding round's steered distribution.
These distinctions prevent a conditional one-round result from being read as unconditional iterative convergence or an accuracy guarantee.

\paragraph{Why the two assumptions are distinct.}
A one-step, three-token example shows that context improvement need not imply useful steering.
Take $Q=(1,0,0)^\top$, $P_{\mathrm{base}}=(0.2,0.3,0.5)^\top$, and $P_{\mathrm{opt}}=(0.25,0.65,0.10)^\top$.
The context update improves target KL from $-\log0.2$ to $-\log0.25$.
At $\alpha=0.1$, all tokens pass the confidence threshold, and Eq.~\ref{eq:centered_contrast09} gives
$\boldsymbol{\delta}\approx(-0.0436,0.2443,-0.2007)^\top$.
Nevertheless, $\Gamma=\langle Q-P_{\mathrm{opt}},\boldsymbol{\delta}\rangle\approx-0.1714<0$.
The context update increases the correct token's probability, but the contrast favors a competing token strongly enough that further extrapolation raises target KL.
This constructed example illustrates the logical need for Assumption~\ref{ass:steering}; it is not an experimental result.

\subsection{Non-Regression under Judge Selection}
\label{app:selection09}

Store the score $a_k=\mathcal J(q,R_k)$ once for each response, and select the earliest candidate attaining the largest score.
Then the selected score is at least every stored score, including $a_0$.
Adding candidates can only increase or preserve this maximum.
This proves the non-regression observation when $\mathrm{Score}$ denotes the stored judge score.
It does not establish non-regression of rule-based correctness, which can disagree with judge ranking.

\subsection{Implementation Conventions}
\label{app:protocol09}

The steering strength defaults to $\mu=2.4$ and the threshold to $\alpha=0.1$; the finite penalty is fixed at $\lambda=-100$.
An excluded token therefore receives an additive logit change $\mu\lambda$; this is distinct from overwriting its final logit with $-100$.
The same prefix is appended to separate KV caches for $C_k$ and $C_{k+1}$.
Contrast and masking are computed before any sampling temperature or top-$p$ truncation.
No context is carried from one test question to the next.
In the stated update, rejecting a rewrite sets $C_{k+1}=C_k$ and makes the raw contrast zero.
The confidence penalty in Eq.~\ref{eq:confidence_mask} can still affect decoding when $\mu>0$; zero contrast alone does not remove this constraint.
In contrast, setting $\mu=0$ removes both the contrast correction and the penalty and recovers context-only decoding.
This distinguishes a rejected rewrite from explicitly disabling the M-step.

With $K=4$ refinement rounds, the initial response and four steered responses form five candidates.
Each M-step uses two target-model context streams, giving nine generation equivalents including initialization.
This counts streams rather than exact FLOPs or latency and excludes external judge, backward, and rewrite cost as well as differences in context and response lengths.

\section{Experiments Supplementary Details}
\label{app:experiments}

\subsection{Datasets}
\label{app:datasets}

We evaluate on six benchmarks spanning free-form mathematical reasoning and multiple-choice scientific knowledge, summarized in Table~\ref{tab:app_datasets}.
All datasets are preprocessed into a unified schema $\{\texttt{id}, \texttt{task}, \texttt{question}, \texttt{choices}, \texttt{answer}\}$, where \texttt{task} $\in$ \{\texttt{math}, \texttt{math\_boxed}, \texttt{mc\_boxed}\} determines both the prompt template and the rule-based scoring rule (Appendix~\ref{app:settings}).

\begin{table}[h]
\centering
\caption{\textbf{Benchmarks.} Task format, size, and answer form. MC = multiple choice. MMLU-Pro is our stratified 560-question subset (14 categories $\times$ 40, sampling seed 42).}
\label{tab:app_datasets}
\small
\begin{tabular}{l r l l l}
\toprule
Benchmark & $n$ & Task type & Answer form & Source \\
\midrule
AIME-2025 & 30 & math\_boxed & integer & \citeyearpar{Balunovic2025MathArenaEL} \\
AIME-2026 & 30 & math\_boxed & integer & \citeyearpar{Balunovic2025MathArenaEL} \\
GPQA-Diamond & 198 & MC (A--D) & option letter & \citeyearpar{Rein2023GPQAAG} \\
GSM8K & 1{,}319 & math & \texttt{\#\#\#\# number} & \citeyearpar{Cobbe2021TrainingVT} \\
MATH-500 & 500 & math\_boxed & LaTeX / numeric & \citeyearpar{lightman2023lets} \\
MMLU-Pro (sampled) & 560 & MC (A--J) & option letter & \citeyearpar{Wang2024MMLUProAM} \\
\bottomrule
\end{tabular}
\end{table}

\paragraph{AIME 2025/2026.}
The American Invitational Mathematics Examination is a competition-mathematics benchmark with 30 problems per edition; answers are integers in $[0,999]$.
We use the 2025 and 2026 editions as separate test sets (30 problems each) and merge them in the main table for noise reduction ($n=30$ per year yields a granularity of $3.3$ points).

\paragraph{GPQA-Diamond.}
The Diamond subset of the Graduate-Level Google-Proof Q\&A benchmark consists of 198 multiple-choice science questions written and validated by domain experts (physics, chemistry, biology).
Each question has four options; we score the letter inside \texttt{boxed\{\}\,} (falling back to an \texttt{Answer: X} pattern).

\paragraph{GSM8K.}
Grade School Math 8.5K contains 1{,}319 grade-school word problems in its test split.
Following the original format, models must write the final numerical answer on the last line as \texttt{\#\#\#\# <number>}; scoring is exact numeric match after stripping commas and decimal points.
GSM8K is the most saturated benchmark in our suite (three models exceed $89\%$ baseline accuracy), which bounds the observable headroom.

\paragraph{MATH-500.}
We use the 500-problem evaluation subset of the MATH competition dataset~\citep{Hendrycks2021MeasuringMP}, following the held-out split of \citet{lightman2023lets}.
It covers seven subjects with LaTeX-form answers (e.g., $\frac{1}{2}$, $-\sqrt{3}$, $5r^5$).
This benchmark stresses symbolic answer matching; our scoring pipeline handles numeric tolerance, LaTeX normalization, and SymPy-based symbolic equivalence (Appendix~\ref{app:settings}).

\paragraph{MMLU-Pro (stratified subset).}
The full MMLU-Pro test set contains 12{,}032 questions across 14 subject categories with up to 10 options each.
For compute efficiency we evaluate on a category-balanced subset of 560 questions: within each of the 14 categories we sample 40 questions uniformly at random with a fixed seed (\texttt{seed=42}), giving equal representation to all categories.
The sampling procedure is implemented in \texttt{datasets/process\_mmlu\_pro.py} and is exactly reproducible:
\begin{quote}
\small
\texttt{rng = random.Random(42)} \\
\texttt{sampled = [rng.sample(cat\_items[c], 40) for c in categories]}
\end{quote}
Prompts are constructed from the question stem and the full option block, with the output format \texttt{Answer: \$LETTER\$} and a step-by-step instruction.
The three-task sampling analysis (top-$p$, 3 seeds) uses GPQA-Diamond, MATH-500, and MMLU-Pro to cover science MC, math free-form, and general-knowledge MC; we exclude AIME ($n=30$, high variance) and GSM8K (saturated) from that analysis.

\subsection{Model Selection}
\label{app:models}

The eight evaluated models are listed in Table~\ref{tab:app_models}, organized around four analysis axes with Qwen3-8B as the anchor.
All models are loaded from their official Hugging Face weights with default configuration (\texttt{dtype=auto}).

\begin{table}[h]
\centering
\caption{\textbf{Evaluated models.} ``Thinking'' marks models with an explicit long chain-of-thought mode. The selection spans scale (3B--32B), three Qwen generations, two families, and thinking/non-thinking modes.}
\label{tab:app_models}
\small
\setlength{\tabcolsep}{4pt}
\renewcommand{\arraystretch}{1.15}
\begin{tabular}{@{}p{2.6cm} p{4.6cm} p{3.4cm} c@{}}
\toprule
Model & HF checkpoint & Axis probed & Thinking \\
\midrule
Llama-3.2-3B-Instruct & \texttt{meta-llama/ Llama-3.2-3B-Instruct} & scale (low) & \xmark \\
Llama-3.1-8B-Instruct & \texttt{meta-llama/ Llama-3.1-8B-Instruct} & family, non-thinking & \xmark \\
Qwen2.5-7B-Instruct & \texttt{Qwen/ Qwen2.5-7B-Instruct} & generation (prev), non-thinking & \xmark \\
Qwen3-8B & \texttt{Qwen/Qwen3-8B} & anchor & \cmark \\
Qwen3.5-9B & \texttt{Qwen/Qwen3.5-9B} & generation (newest) & \cmark \\
Qwen3-14B & \texttt{Qwen/Qwen3-14B} & scale (mid) & \cmark \\
Qwen3-32B & \texttt{Qwen/Qwen3-32B} & scale (high) & \cmark \\
DeepSeek-R1-0528- Qwen3-8B & \texttt{deepseek-ai/ DeepSeek-R1-0528- Qwen3-8B} & reasoning distillation & \cmark \\
\bottomrule
\end{tabular}
\end{table}

\paragraph{Scale.}
Qwen3-8B/14B/32B and Llama-3.2-3B/Llama-3.1-8B probe whether test-time gains shrink as models grow (main-table $\overline{\Delta}$ analysis).
Mean gains within Qwen3 are non-monotonic ($+9.42$, $+7.44$, and $+8.68$ points for 8B, 14B, and 32B), showing that parameter scale alone does not determine adaptation benefit.
The larger gains on AIME than on near-saturated GSM8K further indicate task dependence within each scale.

\paragraph{Generation.}
Qwen2.5-7B-Instruct $\to$ Qwen3-8B $\to$ Qwen3.5-9B form a three-generation line within one family.
Every generation still gains $+6$ to $+10$ points on average after test-time optimization, answering the question of whether newer base models obsolete per-question adaptation: they do not.

\paragraph{Family.}
Qwen3-8B vs.\ Llama-3.1-8B-Instruct compares families at matched parameter scale, while retaining differences in architecture and training.
Llama-3.1-8B gains $22.4$ points on MATH-500, demonstrating applicability beyond the Qwen family; the comparison does not isolate an individual training choice.

\paragraph{Thinking mode.}
Qwen3-8B natively supports a thinking mode (explicit chain of thought), whereas Qwen2.5-7B-Instruct and Llama-3.1-8B-Instruct do not.
Non-thinking models gain more over Base on MATH-500 ($+9.2$/$+22.4$ vs.\ $+8.0$), showing that the full framework also benefits models without a native thinking mode.
Because the checkpoints differ, these gains do not establish that steering substitutes for thinking; on Llama-3.1-8B, TextGrad already accounts for most of the MATH-500 improvement.

\paragraph{Reasoning distillation.}
DeepSeek-R1-0528-Qwen3-8B was obtained by post-training Qwen3-8B-Base on chain-of-thought traces distilled from DeepSeek-R1-0528~\citep{deepseek2025r10528distill}.
EMO improves its own Base on all five task groups ($+6.02$ points on average), demonstrating that this distilled checkpoint retains scope for test-time adaptation.
Its mean M-step increment over TextGrad is $-0.64$, although MMLU-Pro improves by $1.2$ points; thus the benefits of the full adaptation loop and of steering must be distinguished.
A possible explanation is that distillation changes the conditional distributions on which steering operates.
In Eq.~\ref{eq:context_displacement}, both the softmax Jacobian $\mathbf J_{P_{\mathrm{opt}}}$ and the context-induced displacement $\Delta\mathbf{z}_t$ depend on the checkpoint; changes to either can alter the steering vector even with fixed $\mu$ and $\alpha$.
Our Qwen3-8B comparison is not a controlled pre-/post-distillation pair, so this remains a mechanistic hypothesis.
Testing it would require matched checkpoints, contexts, prefixes, and decoding budgets, measuring conditional entropy, context-induced logit displacements, and reference-token probability changes.
The current accuracy results do not establish reduced entropy, overconfidence, or reinforcement-learning saturation.
For this model we report the v2 rerun with an extended generation budget (16{,}384 tokens), as its verbose thinking chains saturate the default 8{,}192-token cap (Appendix~\ref{app:settings}).

\subsection{Experimental Settings}
\label{app:settings}

\paragraph{Metrics.}
\emph{Final accuracy} is rule-based and gold-answer-based; the judge score is never used for reporting, only as the internal optimization signal.
\begin{itemize}[leftmargin=1.5em,itemsep=1pt,topsep=2pt]
\item \textbf{math (GSM8K)}: extract the \texttt{\#\#\#\# <number>} final line; exact numeric match after removing commas and decimal points.
\item \textbf{math\_boxed (MATH-500, AIME)}: extract the content of the final \texttt{\textbackslash boxed\{\}\,}; equality is decided by a four-stage cascade---(i) numeric comparison with $10^{-6}$ tolerance (resolving $\frac{1}{2}$ vs.\ $0.5$), (ii) normalized string match (LaTeX normalization: stripping \texttt{\textbackslash left}/\texttt{\textbackslash right}, \texttt{\textbackslash tfrac}$\to$\texttt{\textbackslash frac}, spacing), (iii) SymPy symbolic equivalence via \texttt{parse\_latex}+\texttt{sympify} combinations, and (iv) a non-trivial-substring fallback (shorter side $\geq 2$ characters and length ratio $<3$).
\item \textbf{mc\_boxed (GPQA, MMLU-Pro)}: the letter inside \texttt{boxed\{\}\,}, falling back to an \texttt{Answer: X} regular expression; exact letter match with gold (options A--J supported).
\end{itemize}
\textbf{judge-select@k}: the reported EMO accuracy looks at the first $k{+}1$ rounds (iter0..iter$k$) and selects the round with the highest judge score (ties broken toward the earlier round), reporting its rule-based correctness.
The main tables use judge-select@4 over five candidates: the initial response and four optimized responses.
\textbf{pass@$N$} is the oracle upper bound: the question counts as solved if any of the first $N$ attempts is correct; it is reported only for reference.

\paragraph{Hyperparameters.}
All experiments use four optimization rounds after the initial generation, with one E-step sample and one M-step sample per round, and the shared initial context \emph{``You are a helpful assistant.''}
Steering defaults: total strength $\mu{=}2.4$, confidence threshold $\alpha{=}0.1$, per-step steering recomputation, and penalty value $-100$ on masked logits.
The generation budget (\texttt{max\_new\_tokens}) is 8{,}192 for GPQA/MATH-500/MMLU-Pro/GSM8K and 12{,}288 for AIME in the greedy main experiments, with the DeepSeek-R1 v2 rerun using 16{,}384 (Appendix~\ref{app:models}); the top-$p$ sampling experiments use 24{,}576 for Qwen3-8B (thinking chains are long) and 12{,}288 for the Llama models.
The top-$p$ sampling experiments use temperature $0.6$, top-$p$ $0.95$, top-$k$ $0$, and seeds $\{0,1,2\}$ with full-chain seeding (\texttt{torch.manual\_seed} at all four sampling sites); each seed also keys the response cache so that seeds do not share generations.
A hyperparameter sweep on Qwen3-8B/GPQA (Appendix~\ref{app:hyperparam}) shows gains are insensitive to the total strength ($+5.6$ to $+7.6$ points across $\mu \in [1.6, 3.2]$) and robust for $\alpha \in [0.1, 0.4]$, while $\alpha{=}0.05$ collapses ($+1.0$) because the confidence constraint becomes too loose.
Table~\ref{tab:app_settings} consolidates all decoding and generation settings per experiment.

\begin{table}[h]
\centering
\caption{\textbf{Consolidated experimental settings.} Generation budget (\texttt{max\_new\_tokens}) is per response; the M-step runs a batch of two contexts per decoding step. The bottom block lists judge scoring and E-step decoding settings.}
\label{tab:app_settings}
\small
\setlength{\tabcolsep}{1.0pt}
\begin{tabular}{l llll}
\toprule
Experiment & Decoding & temp & top-$p$ / top-$k$ & \texttt{max\_new\_tokens} \\
\midrule
Greedy main (8 models) & greedy & -- & -- / -- & 8{,}192; 12{,}288 (AIME); 16{,}384 (DR1-v2) \\
top-$p$ main (Qwen3-8B) & sampling & 0.6 & 0.95 / 0 & 24{,}576 \\
top-$p$ main (Llama-8B/3B) & sampling & 0.6 & 0.95 / 0 & 12{,}288 \\
top-$k$ ablation & sampling & 0.6 & -- / 20 & as top-$p$ main (seed 0 only) \\
BoN/SC pool & sampling & 0.6 & 0.95 / 0 & as top-$p$ main, $n{=}10$/question \\
\midrule
\multicolumn{5}{l}{\textit{Judge (Qwen3.5-122B-A10B FP8, shared by all experiments)}} \\
\midrule
\multicolumn{2}{l}{Scoring} & 0 (greedy) & \multicolumn{2}{l}{cached; \texttt{max\_tokens}=2{,}048; thinking off} \\
\multicolumn{2}{l}{E-step calls (greedy main)} & 0 (greedy) & \multicolumn{2}{l}{caching off; thinking off} \\
\multicolumn{2}{l}{E-step calls (sampling experiments)} & 0.7 & \multicolumn{2}{l}{top-$p$ 0.95; caching off; thinking off} \\
\multicolumn{2}{l}{Server} & \multicolumn{3}{l}{vLLM, TP=4, \texttt{max\_model\_len}=32{,}768, CUDA graphs on} \\
\bottomrule
\end{tabular}
\end{table}

Sampling experiments use seeds $\{0,1,2\}$ with full-chain seeding; every EM run uses four optimization rounds after the initial generation, with one E-step and one M-step sample per round and the initial context \emph{``You are a helpful assistant.''}; evaluation batch sizes (questions per forward batch) are chosen per model and task to fit GPU memory under the worst-case two-context KV budget.

\paragraph{Judge deployment.}
The judge is Qwen3.5-122B-A10B (FP8), served with vLLM (tensor parallelism 4, max context 32{,}768, CUDA graphs enabled).
In the greedy main experiments, the target model and all judge-side calls---scoring, feedback, backward, and context rewriting---use greedy decoding.
In the sampling experiments, judge scoring remains greedy with disk caching, while feedback, backward, and context rewriting use temperature $0.7$, top-$p$ $0.95$, and no caching.
Scoring and E-step calls use separate engine instances.
Both engines disable the judge's thinking mode.
The judge never sees the gold answer; its prompt requires it to independently solve the problem, score 0--10, and---critically---produce feedback \emph{without revealing or hinting at the final answer} (numbers, letters, or expressions), which keeps the E-step gradient estimate free of answer leakage.
An additional regex filter rejects any rewritten context that leaks a concrete answer, retaining the previous context in that case.

\paragraph{Answer-extraction fix and rescoring.}
An early version of the \texttt{math\_boxed} scorer hard-extracted digits from non-numeric boxed answers (e.g., $7\pi \to 7$), systematically under-reporting MATH-500 accuracy.
After fixing the extraction cascade, we rescored \emph{all} affected stored responses with the corrected rule (flips occur only on MATH-500; GPQA/MMLU/GSM8K/AIME are unchanged), and all numbers in this paper use the rescored values.
The rescored greedy baseline of Qwen3-8B on MATH-500 ($75.0$) is consistent with the independently run sampling baseline ($79.6{\pm}0.4$ at temperature 0.6), validating the fix.

\paragraph{Degenerate-generation handling (Llama).}
Llama-3.1/3.2 occasionally enter repetitive loops, filling the entire generation budget with a short cycle.
Beyond the standard exact-cycle detector (10--30 token loops), we add an information-saturation detector: once a generation exceeds 4{,}000 tokens, it is terminated early if the novel-token rate over each of the last three 200-token windows falls below $0.10$ relative to a 400-token baseline window.
On a calibration set the detector hits $97\%$ of degenerate chains with $0\%$ false positives on normal chains and speeds up MATH-500 by ${\sim}12\times$.
Degenerate responses are almost always incorrect, so early termination has negligible effect on accuracy.

\paragraph{Two-arm protocol.}
The TextGrad (no-M-step) arm shares its baseline and iter-0 responses with the EMO arm by construction (\texttt{reuse-baseline/iter0-from}), so both arms start from an identical state and diverge only from iteration 1 onward.
In the sampling experiments the no-M-step arm additionally reuses the same sharding, guaranteeing a 1:1 question match across arms.

\paragraph{Compute.}
Each optimization round generates one response using two context-conditioned model streams. Including the initial generation, EMO@$k$ uses $N=2k+1$ target-model generation equivalents and produces $k+1$ candidates. Section~\ref{sec:compute} matches this budget to $N$ independent samples for SC/BoN; the accounting excludes judge/rewrite calls, response-length differences, and latency.
Judge throughput is not a bottleneck: with CUDA graphs, the 122B FP8 judge sustains ${\sim}22$ concurrent scoring workers per 4-GPU instance.

\paragraph{Self-consistency and Best-of-$N$ baselines.}
The SC/BoN comparison shares the sampling pool of the top-$p$ experiments: for each question we draw $n{=}10$ i.i.d.\ samples at temperature $0.6$, top-$p$ $0.95$, with the same per-model generation budget and the same seeds $\{0,1,2\}$ as the EMO sampling runs.
\textbf{BoN@$N$} selects, among the first $N$ samples, the one with the highest judge score (the same scoring engine and prompt as EMO's judge-select, ties broken toward the earlier sample) and reports its rule-based correctness.
\textbf{SC@$N$} takes a majority vote over the extracted final answers of the first $N$ samples (the same extraction rules as the main scoring pipeline); a sample whose answer cannot be extracted abstains, and if all $N$ abstain the first sample's rule-based verdict is used.
On a vote tie, the answer with the highest total judge score among the tied candidates is selected.
\textbf{pass@$N$} is the oracle upper bound shared by SC and BoN (any of the first $N$ samples correct).

\subsection{Full Experimental Results}
\label{app:full_results}

\subsubsection{AIME-2025 and AIME-2026 Reported Separately}
\label{app:aime_split}


Table~\ref{tab:aime_split} reports AIME-2025 and AIME-2026 separately (the main paper merges the two 30-problem sets).
Because each year has only 30 problems, one problem corresponds to $3.3$ points; small differences should be interpreted in light of this coarse resolution.
EMO more than doubles Qwen3-8B's Base accuracy on both editions ($16.7\!\to\!43.3$ on 2025, $23.3\!\to\!50.0$ on 2026), whereas the Llama models remain near zero.
For Llama-3.2-3B, even pass@5 reaches only $3.3$ and $6.7$ on the two editions.
Thus a perfect selector over these pools would still solve few questions: the principal observed limitation is candidate generation.
Low baseline accuracy provides headroom but does not ensure that four feedback-guided updates can access correct solutions.

\begin{table}[h]
\centering
\caption{\textbf{AIME-2025 and AIME-2026 reported separately} (accuracy \%, greedy; 30 problems per year, so 1 problem $= 3.3$ points). pass@5 is the oracle upper bound over the 5 EMO rounds. The main-paper Table~\ref{tab:main} reports the average of the two years.}
\label{tab:aime_split}
\setlength{\tabcolsep}{4pt}
\small
\begin{tabular}{l cccc}
\toprule
\multicolumn{5}{l}{\textit{(a) AIME-2025}} \\
\midrule
Model & Base & TextGrad & EMO & pass@5 \\
\midrule
Llama-3.2-3B   & 0.0 & \B{6.7} & 3.3 & 3.3 \\
Llama-3.1-8B   & 0.0 & \B{3.3} & 3.3 & 3.3 \\
Qwen2.5-7B     & 6.7 & 13.3 & \B{23.3} & 26.7 \\
Qwen3-8B       & 16.7 & 36.7 & \B{43.3} & \B{60.0} \\
DR1-Qwen3-8B   & 13.3 & \B{20.0} & 16.7 & 33.3 \\
Qwen3.5-9B     & 26.7 & 26.7 & \B{53.3} & \B{80.0} \\
Qwen3-14B      & 16.7 & 30.0 & \B{33.3} & \B{53.3} \\
Qwen3-32B      & 10.0 & \B{43.3} & \B{43.3} & 66.7 \\
\midrule
\multicolumn{5}{l}{\textit{(b) AIME-2026}} \\
\midrule
Model & Base & TextGrad & EMO & pass@5 \\
\midrule
Llama-3.2-3B   & 0.0 & \B{6.7} & 3.3 & 6.7 \\
Llama-3.1-8B   & 0.0 & \B{3.3} & \B{3.3} & 13.3 \\
Qwen2.5-7B     & 3.3 & \B{10.0} & \B{10.0} & 16.7 \\
Qwen3-8B       & 23.3 & \B{50.0} & \B{50.0} & \B{66.7} \\
DR1-Qwen3-8B   & 3.3 & \B{16.7} & \B{16.7} & 43.3 \\
Qwen3.5-9B     & 23.3 & 26.7 & \B{36.7} & \B{73.3} \\
Qwen3-14B      & 23.3 & \B{36.7} & 33.3 & 56.7 \\
Qwen3-32B      & 26.7 & 43.3 & \B{53.3} & \B{73.3} \\
\bottomrule
\end{tabular}
\end{table}

\subsubsection{Accuracy vs.\ Iteration for All Models}
\label{app:iter_curves}


Figures~\ref{fig:app_gsm8k}--\ref{fig:app_aime2026} plot the accuracy of the judge-selected response (solid lines) and the pass@$n$ oracle upper bound (dashed lines) against the EMO iteration $n$, for all 8 models on each of the six datasets.
All curves are computed from the greedy main experiments, with the baseline (grey dashed line) scored under the same unified judge.
Throughout this subsection, iteration $n \in \{1,\dots,5\}$ counts the baseline round as $n{=}1$ (i.e., iteration $n$ corresponds to index iter$n{-}1$ elsewhere).

Three patterns emerge.
\textbf{(i) Early convergence.}
Across the 44 (model, task) pairs with a non-saturated gain ($\geq 2$ points), iteration 2 already realizes on average about half of the final gain of EMO, and on saturated tasks (GSM8K for the Qwen3 family) the curve flattens after a single iteration.
The main exceptions are the hardest tasks (AIME, GPQA-D for weak models), where gains accrue through iteration 4---consistent with headroom being unlocked late when the base policy is far from the target.
\textbf{(ii) Steering benefits depend on the task beyond its answer format.}
On MMLU-Pro the pass@5 of EMO exceeds that of TextGrad for all 8 models (mean gap $+9.6$ points, up to $+17.9$ for Llama-3.1-8B: $74.5$ vs.\ $56.6$), and the selected accuracy likewise wins $8/8$.
On free-form mathematics (MATH-500, GSM8K), the mean pass@5 gap is at most $1.4$ points, suggesting that context refinement already provides most of the observed coverage.
However, GPQA-D also uses multiple-choice answers and exhibits mixed selected-accuracy effects, so answer format alone cannot account for the MMLU-Pro pattern.
The results motivate testing whether the quality of context-induced distribution shifts, rather than output format, predicts when steering helps.
\textbf{(iii) Candidate coverage and selection can improve at different rates.}
On Qwen3-14B/GPQA-D, EMO reaches $79.8$ pass@5 against $65.2$ selected accuracy.
The gap shows that correct candidates are available for additional questions but are not always selected; by itself, it does not measure semantic diversity.
Rising coverage across rounds establishes that later attempts solve previously uncovered questions, motivating stronger verification to realize these gains (Section~\ref{sec:iteration_selection}).

\begin{figure}[h]
\centering
\includegraphics[width=\linewidth]{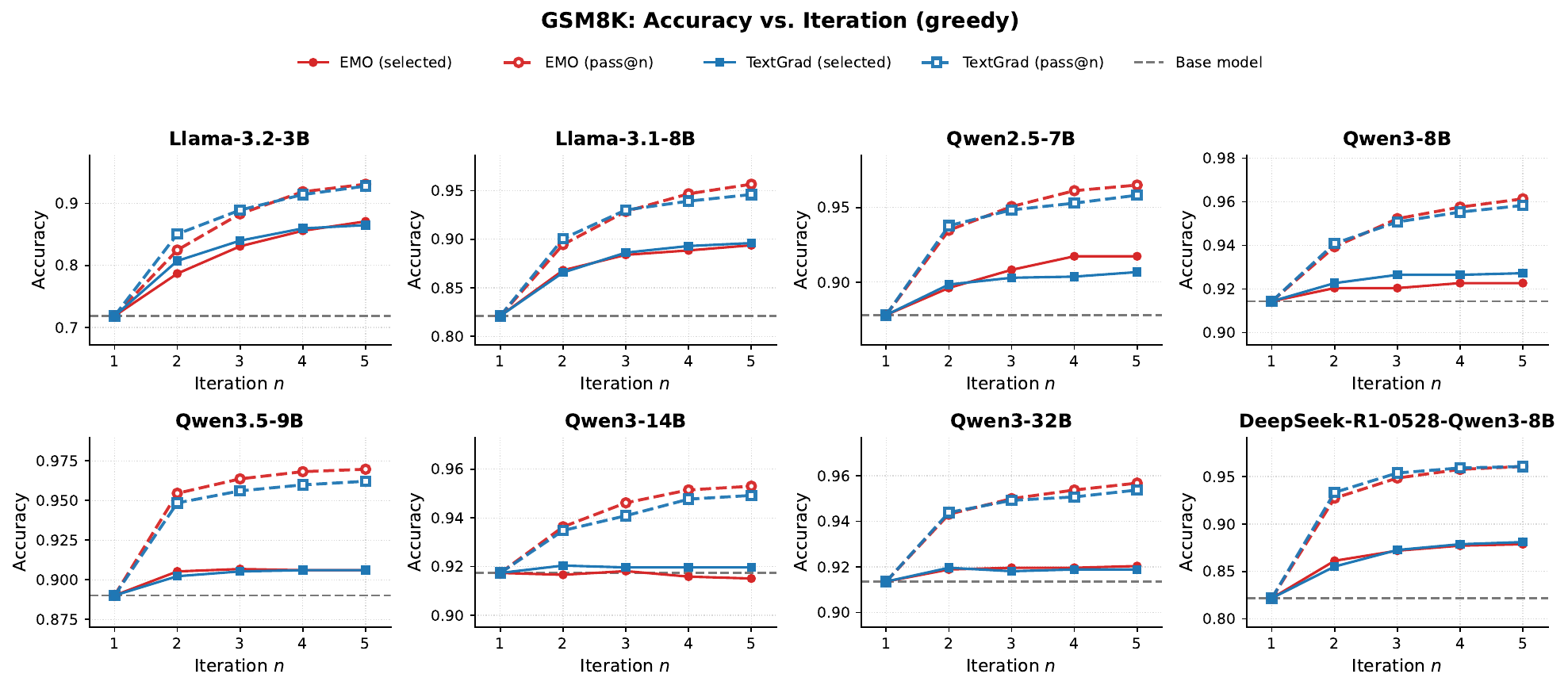}
\caption{GSM8K: accuracy vs.\ EMO iteration for all models (greedy). Solid: judge-selected accuracy; dashed: pass@$n$ (oracle); grey: base model.}
\label{fig:app_gsm8k}
\end{figure}

\begin{figure}[h]
\centering
\includegraphics[width=\linewidth]{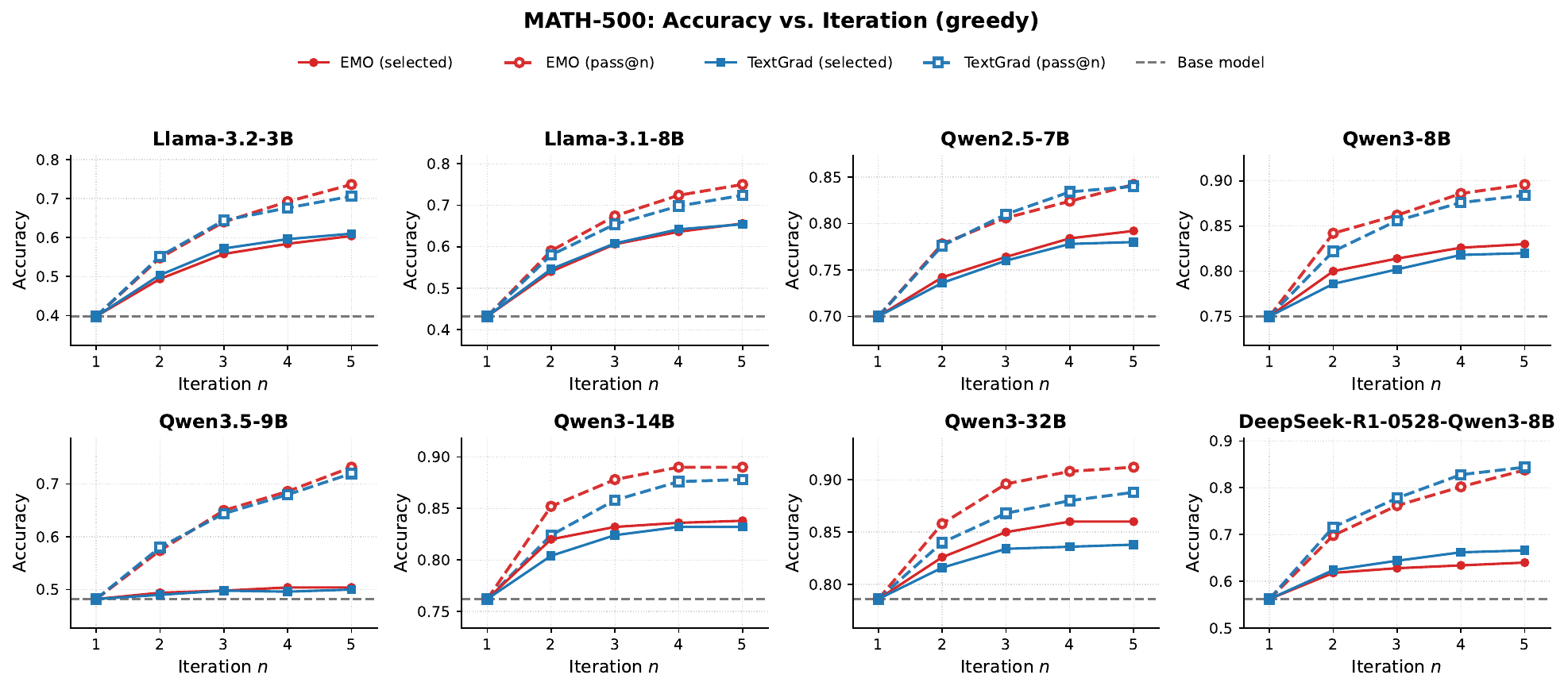}
\caption{MATH-500: accuracy vs.\ EMO iteration for all models (greedy).}
\label{fig:app_math_500}
\end{figure}

\begin{figure}[h]
\centering
\includegraphics[width=\linewidth]{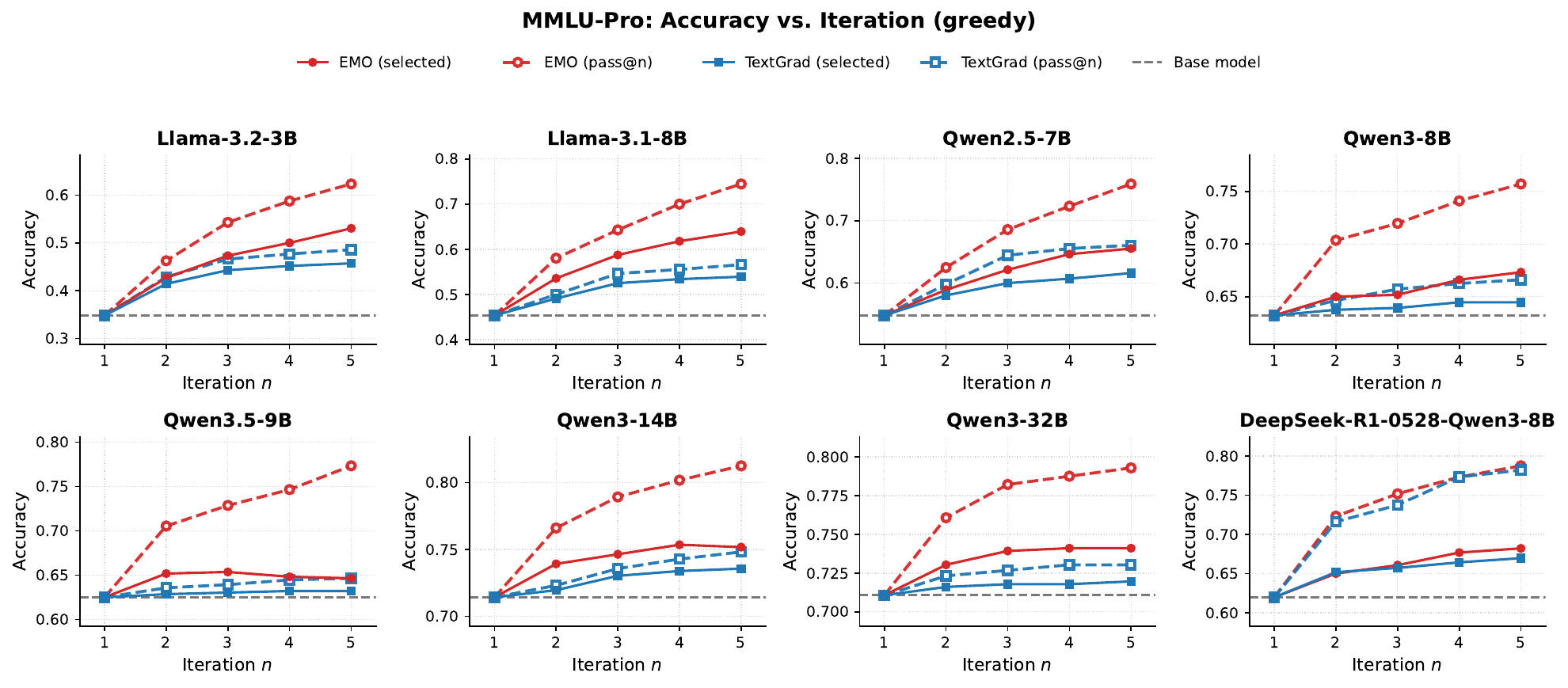}
\caption{MMLU-Pro: accuracy vs.\ EMO iteration for all models (greedy).}
\label{fig:app_mmlu}
\end{figure}

\begin{figure}[h]
\centering
\includegraphics[width=\linewidth]{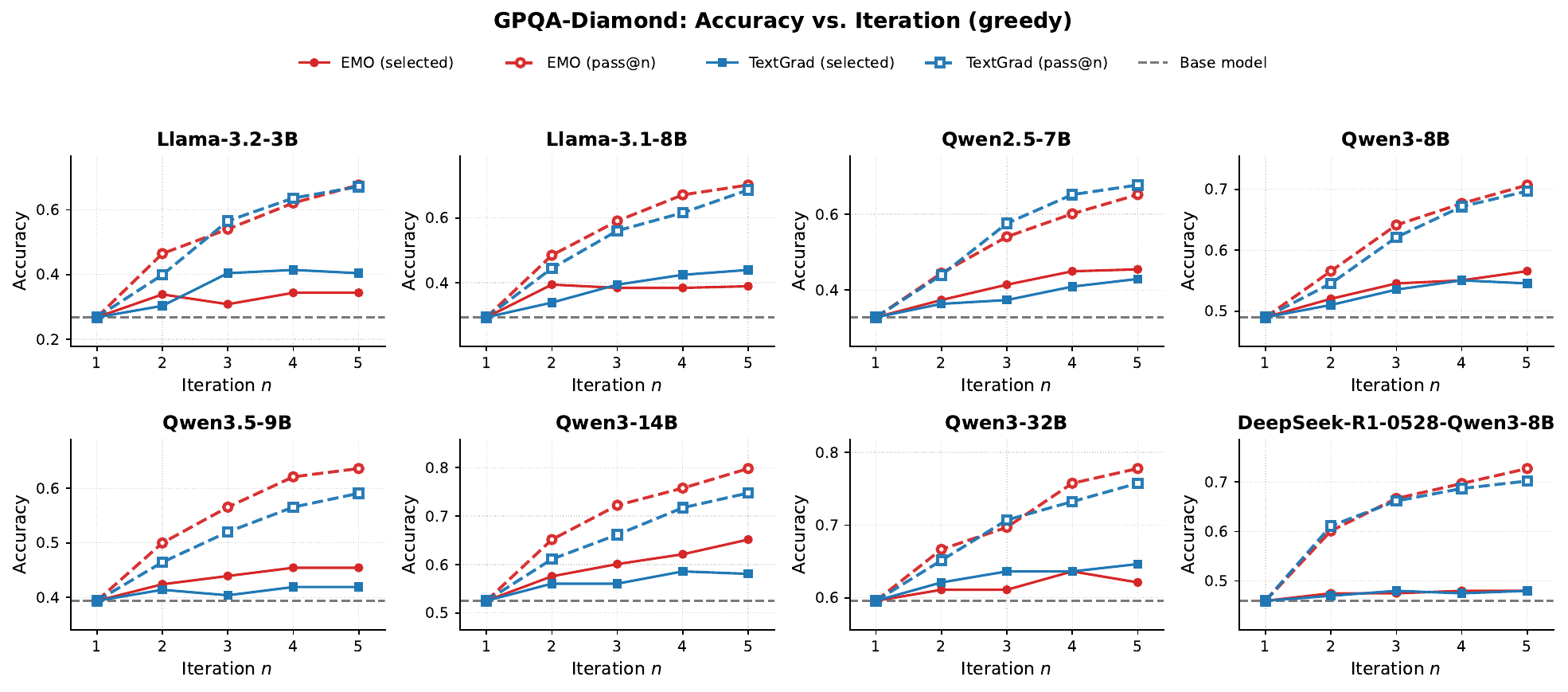}
\caption{GPQA-Diamond: accuracy vs.\ EMO iteration for all models (greedy).}
\label{fig:app_gpqa}
\end{figure}

\begin{figure}[h]
\centering
\includegraphics[width=\linewidth]{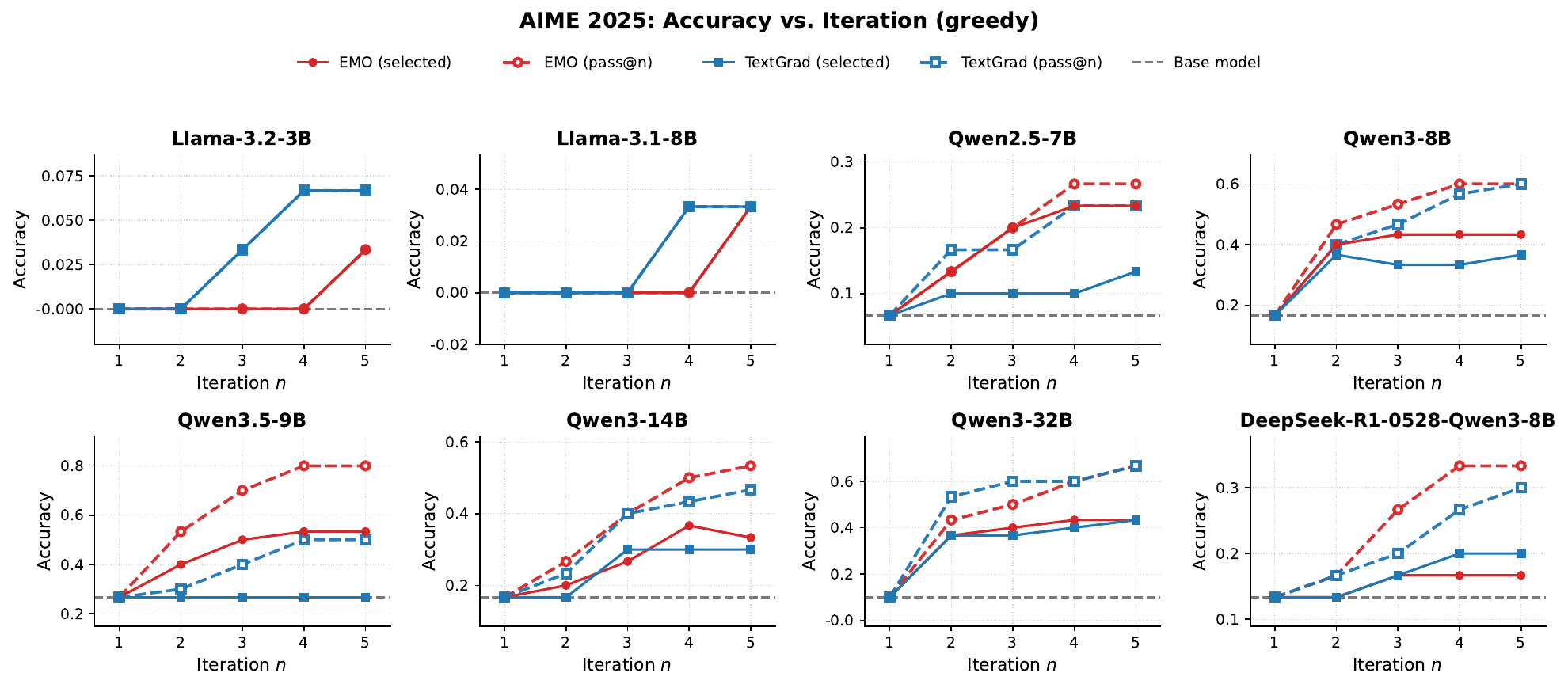}
\caption{AIME-2025: accuracy vs.\ EMO iteration for all models (greedy).}
\label{fig:app_aime2025}
\end{figure}

\begin{figure}[h]
\centering
\includegraphics[width=\linewidth]{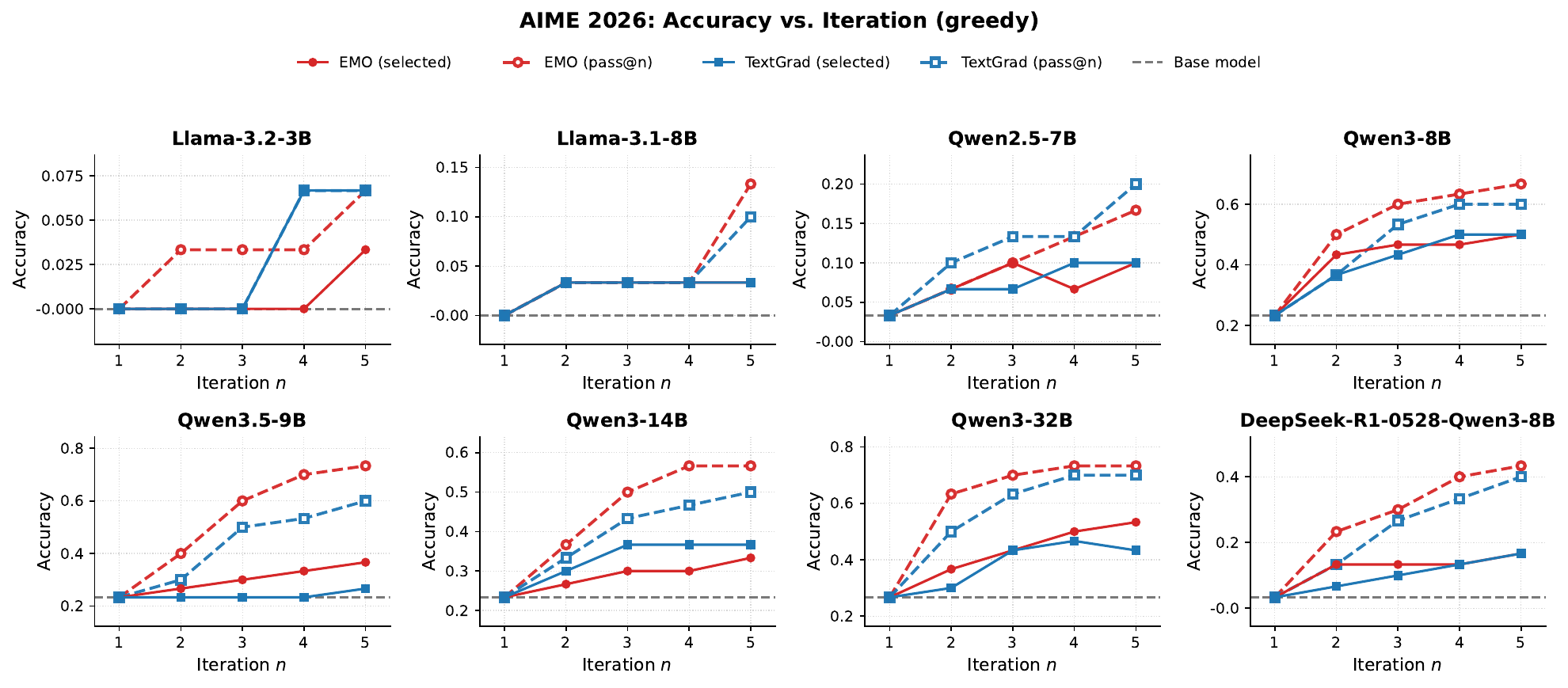}
\caption{AIME-2026: accuracy vs.\ EMO iteration for all models (greedy).}
\label{fig:app_aime2026}
\end{figure}
\subsubsection{Component Ablation on Qwen3-32B}
\label{app:component_32b}


Table~\ref{tab:component_32b} repeats the component ablation of Table~\ref{tab:component} on Qwen3-32B.
The cross-scale comparison distinguishes three forms of component dependence.
First, steering and external judging can contribute differently even within AIME.
With the external judge, the M-step ties context-only optimization on AIME-2025 ($43.3$) but adds $10.0$ points on AIME-2026 ($53.3$ vs.\ $43.3$).
With steering retained, external judging adds $13.3$ and $6.6$ points over self-judging, respectively; neither component is uniformly dominant across editions.
Second, steering on GPQA-D loses $2.5$ points under both judge choices ($62.1$ vs.\ $64.6$ externally judged; $61.6$ vs.\ $64.1$ self-judged).
This differs from Qwen3-8B, where replacing the self-judge with the external judge changes the steering increment from $-3.5$ to $+2.1$.
Feedback quality and steering utility can interact, but external judging does not universally resolve an unfavorable steering effect.
Third, with steering retained, the external-judge increment on MMLU-Pro changes from $+2.3$ at 8B to $-1.1$ at 32B, where self-judging reaches $75.4$ versus $74.3$.
This reversal motivates model- and task-dependent judge allocation, but does not establish better intrinsic self-evaluation at 32B: judge replacement changes both feedback and candidate selection, and the greedy comparison does not estimate uncertainty across runs.
Occasional repetitive-generation failures under self-judging are documented in Appendix~\ref{app:settings}.

\begin{table}[h]
\centering
\caption{\textbf{Component ablation on Qwen3-32B} (accuracy \%, greedy), mirroring Table~\ref{tab:component} (Qwen3-8B). AIME rows are reported per year (30 problems each).}
\label{tab:component_32b}
\setlength{\tabcolsep}{4pt}
\small
\begin{tabular}{l ccccc}
\toprule
Dataset & Base & EMO & \shortstack{w/o\\M-step} & \shortstack{w/o\\Judge} & \shortstack{w/o\\Both} \\
\midrule
AIME-2025 & 10.0 & \B{43.3} & \B{43.3} & 30.0 & 26.7 \\
AIME-2026 & 26.7 & \B{53.3} & 43.3 & 46.7 & 43.3 \\
GPQA-D    & 59.6 & 62.1 & \B{64.6} & 61.6 & 64.1 \\
GSM8K     & 91.4 & \B{92.0} & 91.9 & 91.7 & 91.7 \\
MATH-500  & 78.6 & \B{86.0} & 83.8 & 83.2 & 82.6 \\
MMLU-Pro  & 71.4 & 74.3 & 72.3 & \B{75.4} & 74.8 \\
\bottomrule
\end{tabular}
\end{table}

\subsubsection{Full Compute-Matched Results}
\label{app:compute_full}


Tables~\ref{tab:compute_emopt}--\ref{tab:compute_pass} report the full sampling results for three models and three tasks (mean$\pm$std over three seeds, temperature $0.6$, top-$p=0.95$).
EMO uses $N=2k+1$ target-model generation equivalents; SC and BoN are reported at every budget $N=1,\ldots,10$.
The candidate pools contain $k+1$ responses for EMO and $N$ for SC/BoN.

\paragraph{Selected accuracy.}
At $k=4$ ($N=9$), EMO exceeds BoN in mean accuracy on eight of nine pairs; Qwen3-8B/GPQA-D is the exception ($60.3$ vs.\ $60.4$).
The Qwen3-8B/MATH-500 means are also close ($85.0$ vs.\ $84.9$), and SC is higher on this pair ($86.1$).
The clearest EMO advantages over BoN are on the six Llama tasks ($+5.2$, $+4.8$, and $+5.7$ points on Llama-3.1-8B GPQA-D, MATH-500, and MMLU-Pro; $+6.5$, $+7.6$, and $+12.7$ on Llama-3.2-3B) and Qwen3-8B/MMLU-Pro ($+2.5$).
SC and BoN trajectories need not be monotone: for example, Qwen3-8B/GPQA-D BoN accuracy decreases from $60.6$ at $N=8$ to $59.8$ at $N=10$.
Likewise, EMO's mean selected accuracy increases with each round in eight of nine pairs; the exception is Llama-3.1-8B/GPQA-D, which decreases from $37.5$ at $k=3$ to $37.4$ at $k=4$.

\paragraph{Oracle coverage.}
At $N=9$, EMO's five-candidate coverage exceeds the nine-sample SC/BoN coverage in seven of nine pairs.
The largest gap is on Llama-3.1-8B/GPQA-D ($73.7$ vs.\ $60.3$); the two exceptions are Qwen3-8B/GPQA-D ($79.1$ vs.\ $80.0$) and Qwen3-8B/MATH-500 ($90.4$ vs.\ $90.9$), where independent sampling retains an edge.
Coverage can continue to improve while selected accuracy changes little, as on Llama-3.1-8B/GPQA-D, indicating that candidate availability and judge selection place different limits on the final result.

\paragraph{How coverage translates into selected accuracy.}
Because a correct selected response must belong to a pool containing a correct response, selected accuracy divided by oracle coverage measures the fraction of covered questions on which selection succeeds.
We compute this descriptive ratio from the reported three-seed means; with equal evaluation-set sizes, it corresponds to pooling question--seed outcomes, rather than averaging per-seed ratios.
At $N=9$ on Llama-3.1-8B/GPQA-D, the ratio is $37.4/73.7 \approx 50.7\%$ for EMO and $32.2/60.3 \approx 53.4\%$ for BoN.
EMO's accuracy advantage therefore coexists with a lower fraction of available successes realized by selection: its greater coverage is essential to the observed advantage.
On Llama-3.1-8B/MATH-500, both coverage ($73.4$ vs.\ $71.9$) and this fraction ($89.4\%$ vs.\ $84.6\%$) favor EMO.
The same pattern appears on Llama-3.2-3B/MATH-500 (Figure~\ref{fig:compute-b}): EMO's coverage is $70.6$ versus BoN's $66.5$, and its selected-to-covered fraction is $60.3/70.6 \approx 85.4\%$ versus $52.7/66.5 \approx 79.2\%$.
These are descriptive relationships, not isolated causal effects of ranking: candidate pools and the questions they cover differ across methods.
They show why a larger absolute oracle--accuracy gap need not indicate less useful generation, and why selection quality alone cannot explain all accuracy gains.

\paragraph{Repeated sampling and feedback address different limitations.}
At $N=9$ on Llama-3.1-8B/MMLU-Pro, SC and BoN share $76.4$ oracle coverage but achieve $51.4$ and $62.5$ accuracy, respectively.
Thus many available correct responses are not recovered by majority voting, while judge selection realizes more of the same pool's potential.
EMO further raises coverage to $83.2$ and accuracy to $68.2$ with only five candidates, showing additional value from changing how subsequent attempts are generated.
For Qwen3-8B/MATH-500, by contrast, SC is strongest ($86.1$), and EMO has no coverage advantage over independent sampling ($90.4$ vs.\ $90.9$).
This contrast suggests that feedback-guided adaptation is most useful where independent sampling and its selection rule leave recoverable errors; it need not dominate when repeated sampling already supplies effective candidates and reliable consensus.

\begin{table}[h]
\centering
\caption{\textbf{EMO vs.\ iteration under top-$p$ sampling} (accuracy \%, mean$\pm$std over 3 seeds). iter $k$ uses $N{=}2k{+}1$ target-model generation equivalents; \emph{Pass} is oracle coverage over the $k+1$ candidate responses.}
\label{tab:compute_emopt}
\setlength{\tabcolsep}{4.5pt}
\small
\begin{tabular}{ll ccc cc}
\toprule
Model & Metric & \multicolumn{5}{c}{iter ($N$)} \\
\cmidrule(lr){3-7}
 & & 0 (1) & 1 (3) & 2 (5) & 3 (7) & 4 (9) \\
\midrule
\multirow{6}{*}{Qwen3-8B}
 & acc \quad GPQA-D    & 54.4{\tiny$\pm$1.4} & 55.4{\tiny$\pm$0.2} & 57.1{\tiny$\pm$1.1} & 58.4{\tiny$\pm$1.3} & \B{60.3{\tiny$\pm$2.1}} \\
 & pass               & 54.4{\tiny$\pm$1.4} & 64.0{\tiny$\pm$0.2} & 70.0{\tiny$\pm$1.7} & 75.6{\tiny$\pm$1.6} & \B{79.1{\tiny$\pm$0.6}} \\
\addlinespace[2pt]
 & acc \quad MATH-500  & 79.6{\tiny$\pm$0.4} & 83.5{\tiny$\pm$0.6} & 84.3{\tiny$\pm$0.7} & 84.9{\tiny$\pm$0.6} & \B{85.0{\tiny$\pm$0.6}} \\
 & pass               & 79.6{\tiny$\pm$0.4} & 86.9{\tiny$\pm$0.6} & 88.9{\tiny$\pm$0.2} & 90.0{\tiny$\pm$0.2} & \B{90.4{\tiny$\pm$0.2}} \\
\addlinespace[2pt]
 & acc \quad MMLU-Pro  & 63.9{\tiny$\pm$1.3} & 67.9{\tiny$\pm$1.1} & 69.4{\tiny$\pm$0.6} & 70.0{\tiny$\pm$0.8} & \B{70.9{\tiny$\pm$0.7}} \\
 & pass               & 63.9{\tiny$\pm$1.3} & 75.8{\tiny$\pm$1.5} & 80.3{\tiny$\pm$1.7} & 83.0{\tiny$\pm$2.2} & \B{84.6{\tiny$\pm$1.7}} \\
\midrule
\multirow{6}{*}{Llama-3.1-8B}
 & acc \quad GPQA-D    & 29.6{\tiny$\pm$1.3} & 35.5{\tiny$\pm$3.6} & 36.5{\tiny$\pm$2.5} & \B{37.5{\tiny$\pm$2.4}} & 37.4{\tiny$\pm$0.7} \\
 & pass               & 29.6{\tiny$\pm$1.3} & 49.3{\tiny$\pm$2.7} & 62.1{\tiny$\pm$1.2} & 70.2{\tiny$\pm$1.2} & \B{73.7{\tiny$\pm$0.4}} \\
\addlinespace[2pt]
 & acc \quad MATH-500  & 45.9{\tiny$\pm$1.6} & 55.5{\tiny$\pm$1.8} & 59.7{\tiny$\pm$1.6} & 63.7{\tiny$\pm$1.5} & \B{65.6{\tiny$\pm$1.6}} \\
 & pass               & 45.9{\tiny$\pm$1.6} & 59.3{\tiny$\pm$1.7} & 66.1{\tiny$\pm$1.7} & 70.5{\tiny$\pm$2.1} & \B{73.4{\tiny$\pm$2.0}} \\
\addlinespace[2pt]
 & acc \quad MMLU-Pro  & 45.1{\tiny$\pm$1.0} & 59.4{\tiny$\pm$2.6} & 63.7{\tiny$\pm$1.5} & 65.7{\tiny$\pm$1.1} & \B{68.2{\tiny$\pm$0.8}} \\
 & pass               & 45.1{\tiny$\pm$1.0} & 66.9{\tiny$\pm$2.1} & 76.2{\tiny$\pm$2.1} & 80.2{\tiny$\pm$1.3} & \B{83.2{\tiny$\pm$1.1}} \\
\midrule
\multirow{6}{*}{Llama-3.2-3B}
 & acc \quad GPQA-D    & 24.4{\tiny$\pm$1.7} & 28.3{\tiny$\pm$2.3} & 32.3{\tiny$\pm$0.8} & 34.0{\tiny$\pm$1.7} & \B{36.5{\tiny$\pm$2.3}} \\
 & pass               & 24.4{\tiny$\pm$1.7} & 43.9{\tiny$\pm$1.5} & 56.6{\tiny$\pm$1.5} & 66.2{\tiny$\pm$2.7} & \B{71.2{\tiny$\pm$1.9}} \\
\addlinespace[2pt]
 & acc \quad MATH-500  & 40.3{\tiny$\pm$0.8} & 50.7{\tiny$\pm$0.9} & 55.5{\tiny$\pm$1.0} & 58.6{\tiny$\pm$1.0} & \B{60.3{\tiny$\pm$1.3}} \\
 & pass               & 40.3{\tiny$\pm$0.8} & 55.8{\tiny$\pm$0.7} & 63.2{\tiny$\pm$0.9} & 67.4{\tiny$\pm$0.9} & \B{70.6{\tiny$\pm$0.7}} \\
\addlinespace[2pt]
 & acc \quad MMLU-Pro  & 35.5{\tiny$\pm$1.3} & 53.2{\tiny$\pm$1.5} & 60.4{\tiny$\pm$1.5} & 64.1{\tiny$\pm$0.5} & \B{66.2{\tiny$\pm$0.5}} \\
 & pass               & 35.5{\tiny$\pm$1.3} & 59.9{\tiny$\pm$1.5} & 71.5{\tiny$\pm$0.7} & 77.9{\tiny$\pm$0.6} & \B{81.6{\tiny$\pm$0.4}} \\
\bottomrule
\end{tabular}
\end{table}

\begin{table}[h]
\centering
\caption{\textbf{Self-consistency (SC@$N$) under top-$p$ sampling} (accuracy \%, mean$\pm$std over 3 seeds), majority vote over $N$ independent samples.}
\label{tab:compute_sc}
\setlength{\tabcolsep}{2pt}
\scriptsize
\begin{tabular}{l l ccccccccccc}
\toprule
Model & Task & \multicolumn{10}{c}{SC@$N$, $N{=}1,\dots,10$} \\
\cmidrule(lr){3-12}
 & & 1 & 2 & 3 & 4 & 5 & 6 & 7 & 8 & 9 & 10 \\
\midrule
\multirow{3}{*}{Qwen3-8B}
 & GPQA-D   & 52.4{\tiny$\pm$1.4} & 57.4{\tiny$\pm$1.0} & 56.4{\tiny$\pm$1.9} & 58.1{\tiny$\pm$0.8} & 57.9{\tiny$\pm$1.7} & \B{58.9{\tiny$\pm$2.3}} & 58.4{\tiny$\pm$1.4} & 58.4{\tiny$\pm$3.2} & 57.9{\tiny$\pm$2.8} & 57.6{\tiny$\pm$1.1} \\
 & MATH-500 & 79.2{\tiny$\pm$0.2} & 82.7{\tiny$\pm$0.7} & 84.2{\tiny$\pm$1.1} & 84.9{\tiny$\pm$0.7} & 85.1{\tiny$\pm$0.2} & 85.6{\tiny$\pm$0.0} & 85.8{\tiny$\pm$0.4} & 86.0{\tiny$\pm$0.4} & 86.1{\tiny$\pm$0.2} & \B{86.3{\tiny$\pm$0.5}} \\
 & MMLU-Pro & 65.5{\tiny$\pm$1.2} & 66.4{\tiny$\pm$0.7} & 66.8{\tiny$\pm$1.2} & 67.2{\tiny$\pm$0.9} & 67.6{\tiny$\pm$0.8} & 67.6{\tiny$\pm$0.8} & \B{67.9{\tiny$\pm$0.3}} & 67.7{\tiny$\pm$1.0} & 67.1{\tiny$\pm$1.2} & 67.7{\tiny$\pm$0.6} \\
\midrule
\multirow{3}{*}{Llama-3.1-8B}
 & GPQA-D   & 26.4{\tiny$\pm$0.9} & 27.8{\tiny$\pm$1.5} & 28.6{\tiny$\pm$1.7} & 27.8{\tiny$\pm$1.2} & 27.6{\tiny$\pm$2.1} & 29.0{\tiny$\pm$1.7} & \B{30.0{\tiny$\pm$1.3}} & 28.8{\tiny$\pm$1.1} & 28.5{\tiny$\pm$0.9} & 28.8{\tiny$\pm$1.5} \\
 & MATH-500 & 45.3{\tiny$\pm$1.9} & 51.7{\tiny$\pm$1.8} & 53.4{\tiny$\pm$1.9} & 54.8{\tiny$\pm$1.9} & 55.5{\tiny$\pm$1.3} & 55.7{\tiny$\pm$1.0} & 56.1{\tiny$\pm$0.4} & 56.0{\tiny$\pm$0.3} & 56.9{\tiny$\pm$0.2} & \B{57.5{\tiny$\pm$0.9}} \\
 & MMLU-Pro & 45.1{\tiny$\pm$0.6} & \B{51.7{\tiny$\pm$0.9}} & 50.4{\tiny$\pm$1.5} & 50.8{\tiny$\pm$0.3} & 49.8{\tiny$\pm$0.2} & 51.7{\tiny$\pm$0.6} & 51.0{\tiny$\pm$1.1} & 51.1{\tiny$\pm$0.7} & 51.4{\tiny$\pm$0.8} & 51.4{\tiny$\pm$0.6} \\
\midrule
\multirow{3}{*}{Llama-3.2-3B}
 & GPQA-D   & 25.8{\tiny$\pm$2.2} & \B{27.8{\tiny$\pm$1.6}} & 26.4{\tiny$\pm$2.4} & 27.1{\tiny$\pm$1.9} & 27.1{\tiny$\pm$2.3} & 26.9{\tiny$\pm$1.3} & 26.4{\tiny$\pm$0.2} & 26.4{\tiny$\pm$1.0} & 26.4{\tiny$\pm$0.9} & 25.9{\tiny$\pm$1.0} \\
 & MATH-500   & 37.8{\tiny$\pm$1.3} & 43.3{\tiny$\pm$1.8} & 45.3{\tiny$\pm$2.7} & 45.9{\tiny$\pm$1.3} & 47.7{\tiny$\pm$1.9} & 48.5{\tiny$\pm$1.9} & 48.7{\tiny$\pm$1.9} & 48.5{\tiny$\pm$2.1} & 48.6{\tiny$\pm$2.1} & \B{49.3{\tiny$\pm$1.6}} \\
 & MMLU-Pro   & 36.5{\tiny$\pm$1.7} & 42.0{\tiny$\pm$1.3} & 40.5{\tiny$\pm$0.9} & 41.7{\tiny$\pm$1.1} & 41.2{\tiny$\pm$0.8} & 41.4{\tiny$\pm$0.5} & 40.9{\tiny$\pm$0.8} & 41.0{\tiny$\pm$0.7} & 42.0{\tiny$\pm$0.5} & \B{42.1{\tiny$\pm$0.9}} \\
\bottomrule
\end{tabular}
\end{table}

\begin{table}[h]
\centering
\caption{\textbf{Best-of-$N$ (BoN@$N$) under top-$p$ sampling} (accuracy \%, mean$\pm$std over 3 seeds), judge-selected best of $N$ independent samples.}
\label{tab:compute_bon}
\setlength{\tabcolsep}{2pt}
\scriptsize
\begin{tabular}{l l ccccccccccc}
\toprule
Model & Task & \multicolumn{10}{c}{BoN@$N$, $N{=}1,\dots,10$} \\
\cmidrule(lr){3-12}
 & & 1 & 2 & 3 & 4 & 5 & 6 & 7 & 8 & 9 & 10 \\
\midrule
\multirow{3}{*}{Qwen3-8B}
 & GPQA-D   & 52.4{\tiny$\pm$1.4} & 57.4{\tiny$\pm$1.0} & 58.2{\tiny$\pm$0.6} & 59.9{\tiny$\pm$0.6} & 59.8{\tiny$\pm$0.6} & 59.9{\tiny$\pm$0.5} & 60.3{\tiny$\pm$0.2} & \B{60.6{\tiny$\pm$0.0}} & 60.4{\tiny$\pm$0.9} & 59.8{\tiny$\pm$0.6} \\
 & MATH-500 & 79.2{\tiny$\pm$0.2} & 82.7{\tiny$\pm$0.7} & 84.1{\tiny$\pm$0.7} & 84.6{\tiny$\pm$0.6} & 84.7{\tiny$\pm$0.5} & 84.9{\tiny$\pm$0.7} & 85.0{\tiny$\pm$0.6} & 85.0{\tiny$\pm$0.7} & 84.9{\tiny$\pm$0.6} & \B{85.1{\tiny$\pm$0.6}} \\
 & MMLU-Pro & 65.5{\tiny$\pm$1.2} & 66.4{\tiny$\pm$0.7} & 67.1{\tiny$\pm$1.0} & 67.7{\tiny$\pm$0.4} & 68.2{\tiny$\pm$0.7} & 68.0{\tiny$\pm$0.5} & 68.2{\tiny$\pm$0.6} & 68.3{\tiny$\pm$0.9} & 68.4{\tiny$\pm$0.5} & \B{68.7{\tiny$\pm$0.4}} \\
\midrule
\multirow{3}{*}{Llama-3.1-8B}
 & GPQA-D   & 26.4{\tiny$\pm$0.9} & 27.8{\tiny$\pm$1.5} & 28.6{\tiny$\pm$1.3} & 29.5{\tiny$\pm$1.0} & 29.5{\tiny$\pm$0.9} & 30.0{\tiny$\pm$0.2} & 30.8{\tiny$\pm$0.4} & 31.6{\tiny$\pm$0.9} & 32.2{\tiny$\pm$0.9} & \B{33.0{\tiny$\pm$0.5}} \\
 & MATH-500 & 45.3{\tiny$\pm$1.9} & 51.7{\tiny$\pm$1.8} & 54.5{\tiny$\pm$1.5} & 55.6{\tiny$\pm$1.3} & 56.9{\tiny$\pm$1.2} & 58.1{\tiny$\pm$0.9} & 59.1{\tiny$\pm$0.1} & 60.1{\tiny$\pm$0.2} & 60.8{\tiny$\pm$0.0} & \B{61.3{\tiny$\pm$0.2}} \\
 & MMLU-Pro & 45.1{\tiny$\pm$0.6} & 51.7{\tiny$\pm$0.9} & 55.1{\tiny$\pm$1.0} & 56.9{\tiny$\pm$1.2} & 58.0{\tiny$\pm$1.1} & 59.6{\tiny$\pm$0.5} & 61.0{\tiny$\pm$0.8} & 61.9{\tiny$\pm$0.6} & 62.5{\tiny$\pm$0.5} & \B{62.9{\tiny$\pm$0.8}} \\
\midrule
\multirow{3}{*}{Llama-3.2-3B}
 & GPQA-D   & 25.8{\tiny$\pm$2.2} & 27.8{\tiny$\pm$1.6} & 28.1{\tiny$\pm$2.0} & 28.5{\tiny$\pm$2.3} & 29.1{\tiny$\pm$1.7} & 29.3{\tiny$\pm$1.1} & 29.5{\tiny$\pm$1.4} & 29.8{\tiny$\pm$1.6} & 30.0{\tiny$\pm$1.7} & \B{30.3{\tiny$\pm$1.8}} \\
 & MATH-500   & 37.8{\tiny$\pm$1.3} & 43.3{\tiny$\pm$1.8} & 45.8{\tiny$\pm$1.4} & 47.8{\tiny$\pm$1.4} & 49.6{\tiny$\pm$1.0} & 50.9{\tiny$\pm$1.3} & 51.9{\tiny$\pm$1.5} & 52.4{\tiny$\pm$1.3} & 52.7{\tiny$\pm$1.5} & \B{53.7{\tiny$\pm$1.1}} \\
 & MMLU-Pro   & 36.5{\tiny$\pm$1.7} & 42.0{\tiny$\pm$1.3} & 46.4{\tiny$\pm$0.8} & 48.2{\tiny$\pm$1.3} & 49.8{\tiny$\pm$1.1} & 50.8{\tiny$\pm$0.9} & 51.9{\tiny$\pm$0.7} & 52.8{\tiny$\pm$0.4} & 53.5{\tiny$\pm$0.4} & \B{54.2{\tiny$\pm$0.4}} \\
\bottomrule
\end{tabular}
\end{table}

\begin{table}[h]
\centering
\caption{\textbf{Oracle coverage at matched generation budgets} (accuracy \%, mean$\pm$std over 3 seeds): at least one candidate is correct. At budget $N=2k+1$, SC/BoN use $N$ independent samples and EMO uses $k+1$ candidate responses.}
\label{tab:compute_pass}
\setlength{\tabcolsep}{4pt}
\small
\begin{tabular}{ll cc cc cc}
\toprule
\multirow{2}{*}{Model} & \multirow{2}{*}{Task} & \multicolumn{2}{c}{$N{=}3$} & \multicolumn{2}{c}{$N{=}5$} & \multicolumn{2}{c}{$N{=}9$} \\
\cmidrule(lr){3-4}\cmidrule(lr){5-6}\cmidrule(lr){7-8}
 & & SC/BoN & EMO & SC/BoN & EMO & SC/BoN & EMO \\
\midrule
\multirow{3}{*}{Qwen3-8B}
 & GPQA-D   & \B{69.4{\tiny$\pm$0.6}} & 64.0{\tiny$\pm$0.2} & \B{74.7{\tiny$\pm$1.1}} & 70.0{\tiny$\pm$1.7} & \B{80.0{\tiny$\pm$0.2}} & 79.1{\tiny$\pm$0.6} \\
 & MATH-500 & \B{87.8{\tiny$\pm$0.0}} & 86.9{\tiny$\pm$0.6} & \B{89.7{\tiny$\pm$0.2}} & 88.9{\tiny$\pm$0.2} & \B{90.9{\tiny$\pm$0.2}} & 90.4{\tiny$\pm$0.2} \\
 & MMLU-Pro & 75.2{\tiny$\pm$0.8} & \B{75.8{\tiny$\pm$1.5}} & 78.7{\tiny$\pm$0.3} & \B{80.3{\tiny$\pm$1.7}} & 81.9{\tiny$\pm$0.5} & \B{84.6{\tiny$\pm$1.7}} \\
\midrule
\multirow{3}{*}{Llama-3.1-8B}
 & GPQA-D   & 45.1{\tiny$\pm$1.3} & \B{49.3{\tiny$\pm$2.7}} & 51.2{\tiny$\pm$0.9} & \B{62.1{\tiny$\pm$1.2}} & 60.3{\tiny$\pm$2.7} & \B{73.7{\tiny$\pm$0.4}} \\
 & MATH-500 & \B{60.7{\tiny$\pm$1.6}} & 59.3{\tiny$\pm$1.7} & 65.9{\tiny$\pm$1.1} & \B{66.1{\tiny$\pm$1.7}} & 71.9{\tiny$\pm$0.4} & \B{73.4{\tiny$\pm$2.0}} \\
 & MMLU-Pro & 62.3{\tiny$\pm$1.1} & \B{66.9{\tiny$\pm$2.1}} & 68.9{\tiny$\pm$0.4} & \B{76.2{\tiny$\pm$2.1}} & 76.4{\tiny$\pm$0.2} & \B{83.2{\tiny$\pm$1.1}} \\
\midrule
\multirow{3}{*}{Llama-3.2-3B}
 & GPQA-D   & \B{45.3{\tiny$\pm$2.5}} & 43.9{\tiny$\pm$1.5} & 53.2{\tiny$\pm$1.7} & \B{56.6{\tiny$\pm$1.5}} & 63.0{\tiny$\pm$0.9} & \B{71.2{\tiny$\pm$1.9}} \\
 & MATH-500 & 52.9{\tiny$\pm$1.9} & \B{55.8{\tiny$\pm$0.7}} & 59.1{\tiny$\pm$2.3} & \B{63.2{\tiny$\pm$0.9}} & 66.5{\tiny$\pm$1.5} & \B{70.6{\tiny$\pm$0.7}} \\
 & MMLU-Pro & 54.0{\tiny$\pm$0.7} & \B{59.9{\tiny$\pm$1.5}} & 60.8{\tiny$\pm$1.3} & \B{71.5{\tiny$\pm$0.7}} & 68.5{\tiny$\pm$1.3} & \B{81.6{\tiny$\pm$0.4}} \\
\bottomrule
\end{tabular}
\end{table}

\subsubsection{Statistical Significance of Sampling Results}
\label{app:significance}

Table~\ref{tab:significance} reports paired significance tests for the top-$p$ sampling experiments (three models $\times$ three tasks).
Each comparison pairs the same question under the same seed, pooling the three seeds ($n$ = questions $\times$ 3).
We apply two-sided paired $t$-tests and Wilcoxon signed-rank tests, with Holm--Bonferroni correction over the 18 reported comparisons per test (nine against Base and nine against BoN@10).
The BoN significance comparisons use ten independent samples against EMO's nine target-model generation equivalents; the matched-budget comparisons in Section~\ref{sec:compute} use BoN@9.
Reported $\Delta$ is the accuracy difference in percentage points between the two arms.

\begin{table}[h]
\centering
\caption{\textbf{Paired significance tests} under top-$p$ sampling ($\Delta$ in points; $n$ = question--seed pairs pooled over seeds $\{0,1,2\}$). $p$-values are Holm-corrected over 18 comparisons; stars mark $^*$ $p{<}0.05$, $^{**}$ $p{<}0.01$, $^{***}$ $p{<}0.001$ (paired $t$-test). Wilcoxon significance decisions agree for all 18 comparisons.}
\label{tab:significance}
\setlength{\tabcolsep}{2.6pt}
\small
\begin{tabular}{ll crr cc}
\toprule
Model & Task & Comparison & $\Delta$ & $n$ & $p_{\mathrm{Holm}}$ & sig. \\
\midrule
\multicolumn{7}{l}{\textit{EMO vs.\ Base}} \\
Qwen3-8B     & GPQA-D   & EMO$-$Base & $+5.9$ & 594  & $2.2{\times}10^{-3}$ & ** \\
Qwen3-8B     & MATH-500 & EMO$-$Base & $+5.4$ & 1500 & $1.0{\times}10^{-16}$ & *** \\
Qwen3-8B     & MMLU-Pro & EMO$-$Base & $+7.0$ & 1680 & $7.7{\times}10^{-18}$ & *** \\
Llama-3.1-8B & GPQA-D   & EMO$-$Base & $+7.7$ & 594  & $2.2{\times}10^{-3}$ & ** \\
Llama-3.1-8B & MATH-500 & EMO$-$Base & $+19.7$ & 1500 & $5.9{\times}10^{-62}$ & *** \\
Llama-3.1-8B & MMLU-Pro & EMO$-$Base & $+23.1$ & 1680 & $3.7{\times}10^{-73}$ & *** \\
Llama-3.2-3B & GPQA-D   & EMO$-$Base & $+12.1$ & 594  & $7.2{\times}10^{-8}$ & *** \\
Llama-3.2-3B & MATH-500 & EMO$-$Base & $+19.9$ & 1500 & $3.6{\times}10^{-62}$ & *** \\
Llama-3.2-3B & MMLU-Pro & EMO$-$Base & $+30.7$ & 1680 & $4.1{\times}10^{-108}$ & *** \\
\midrule
\multicolumn{7}{l}{\textit{EMO vs.\ BoN@10 (same judge selection)}} \\
Qwen3-8B     & GPQA-D   & EMO$-$BoN & $+0.5$ & 594  & $1.00$ & n.s. \\
Qwen3-8B     & MATH-500 & EMO$-$BoN & $-0.1$ & 1500 & $1.00$ & n.s. \\
Qwen3-8B     & MMLU-Pro & EMO$-$BoN & $+2.2$ & 1680 & $9.6{\times}10^{-2}$ & n.s. \\
Llama-3.1-8B & GPQA-D   & EMO$-$BoN & $+4.4$ & 594  & $1.3{\times}10^{-1}$ & n.s. \\
Llama-3.1-8B & MATH-500 & EMO$-$BoN & $+4.3$ & 1500 & $1.1{\times}10^{-3}$ & ** \\
Llama-3.1-8B & MMLU-Pro & EMO$-$BoN & $+5.2$ & 1680 & $4.8{\times}10^{-5}$ & *** \\
Llama-3.2-3B & GPQA-D   & EMO$-$BoN & $+6.2$ & 594  & $2.5{\times}10^{-2}$ & * \\
Llama-3.2-3B & MATH-500 & EMO$-$BoN & $+6.6$ & 1500 & $4.2{\times}10^{-7}$ & *** \\
Llama-3.2-3B & MMLU-Pro & EMO$-$BoN & $+12.1$ & 1680 & $1.1{\times}10^{-20}$ & *** \\
\bottomrule
\end{tabular}
\end{table}

Two patterns emerge.
\textbf{(i) Gains over Base are significant in all nine model--task pairs} ($p_{\mathrm{Holm}} \leq 2.2{\times}10^{-3}$, paired $t$-test), with the largest adjusted $p$-values on Qwen3-8B/GPQA-D and Llama-3.1-8B/GPQA-D (both $p_{\mathrm{Holm}} = 2.2{\times}10^{-3}$) and the smallest on Llama-3.2-3B/MMLU-Pro ($p_{\mathrm{Holm}} = 4.1{\times}10^{-108}$).
The Wilcoxon test agrees with the $t$-test in every cell.
\textbf{(ii) Advantages over BoN are significant where they are largest.}
Five of the six Llama cells are significant after correction ($p_{\mathrm{Holm}} \leq 2.5{\times}10^{-2}$), the exception being Llama-3.1-8B/GPQA-D ($+4.4$, $p_{\mathrm{Holm}} = 1.3{\times}10^{-1}$), while the three Qwen cells---two near-ties and a $+2.2$ gap---do not reach significance, consistent with the endpoint analysis in Section~\ref{sec:compute}.


\subsection{Hyperparameter Analysis}
\label{app:hyperparam}


The M-step steering has two hyperparameters: the total strength $\mu$ and the confidence threshold $\alpha$ that masks low-confidence tokens (Section~\ref{sec:mstep}).
Table~\ref{tab:hyperparam} sweeps them on Qwen3-8B over GPQA-Diamond (greedy, all other settings identical to the main experiments).

\begin{table}[h]
\centering
\caption{\textbf{Hyperparameter sweep on Qwen3-8B / GPQA-Diamond} (accuracy \%, greedy; base $=49.0$). Panel (a) sweeps the total strength $\mu$ at $\alpha{=}0.1$; panel (b) sweeps the confidence threshold $\alpha$ at $\mu{=}2.4$. The main-experiment default ($\mu{=}2.4$, $\alpha{=}0.1$) attains $56.6$ ($+7.6$).}
\label{tab:hyperparam}
\small
\begin{tabular}{c cc}
\toprule
\multicolumn{3}{l}{\textit{(a) Total strength $\mu$ ($\alpha{=}0.1$)}} \\
\midrule
$\mu$ & EMO & $\Delta$ vs.\ base \\
\midrule
1.6 & 56.6 & +7.6 \\
2.0 & 54.5 & +5.6 \\
2.4 & 56.6 & +7.6 \\
2.8 & 56.1 & +7.1 \\
3.2 & 54.5 & +5.6 \\
\midrule
\multicolumn{3}{l}{\textit{(b) Confidence threshold $\alpha$ ($\mu{=}2.4$)}} \\
\midrule
$\alpha$ & EMO & $\Delta$ vs.\ base \\
\midrule
0.05 & 50.0 & +1.0 \\
0.1  & 56.6 & +7.6 \\
0.2  & \B{58.6} & \B{+9.6} \\
0.4  & 57.1 & +8.1 \\
\bottomrule
\end{tabular}
\end{table}

\paragraph{Strength is insensitive.}
Across a $2\times$ range of the total strength ($\mu \in [1.6, 3.2]$) the gain stays within $5.6$--$7.6$ points with no monotone trend: $\mu=1.6$ and the default $\mu=2.4$ tie for the highest observed accuracy ($56.6$), and no setting collapses.
This indicates that the KL-gradient signal (Section~\ref{sec:mstep}) is a robust direction---once the update points toward the optimized context, its magnitude is of secondary importance.

\paragraph{The confidence constraint is what matters.}
The picture changes qualitatively for $\alpha$.
At $\alpha{=}0.05$ the gain collapses to $+1.0$: with too permissive a mask, the steering vector acts on tokens where the optimized context expresses almost no preference, injecting noise rather than signal.
At $\alpha{=}0.1$--$0.4$ the gain is stable ($+7.6$ to $+9.6$, with the best value at $\alpha{=}0.2$), i.e., the constraint only needs to separate confident tokens from noise, not to be tuned precisely.
We conclude that the confidence mask, not the step size, is the load-bearing hyperparameter of the M-step, and keep the conservative default $\alpha{=}0.1$ throughout the main experiments (where it already attains the $+7.6$ gain reported above).

\subsection{Prompt Templates}
\label{app:prompts}


This section documents the four prompts that drive the E-step (Section~\ref{sec:method}): the judge that scores responses, the backward engine that produces the textual gradient, the optimizer that rewrites the context, and the answer-leakage filter that guards the whole loop.
All prompts are task-agnostic (the same templates are used for all six benchmarks); the question stem and option block are the only task-specific inputs.

\subsubsection{Judge Scoring Prompt}
\label{app:prompt_judge}

The judge prompt (\texttt{\_EVAL\_SYSTEM\_PROMPT} in \texttt{trainer.py}) instructs the judge model to grade a response on a 0--10 scale while remaining blind to the gold answer.
Its key requirements are: (i) solve the problem \emph{independently} before grading, never assuming the model's answer is correct; (ii) 10 means the final answer is correct, 0 means wrong or missing, with partial credit (1--9) for substantive progress on incorrect solutions; (iii) the feedback must locate the methodological error and how to fix it \emph{without ever revealing or hinting at the final answer} (no numbers, letters, or expressions); (iv) no restating of the question or response.

\begin{quote}
\small
\texttt{Question: \{question\}\textbackslash n\textbackslash nResponse: \{response\}\textbackslash n\textbackslash nEvaluate this response. Provide detailed feedback and a score.}
\end{quote}

The judge returns its output in the format \texttt{<FEEDBACK>...</FEEDBACK><SCORE>0-10</SCORE>}; the score drives judge-select (Appendix~\ref{app:settings}) and the feedback becomes the input to the backward pass below.

\subsubsection{Backward (Text Gradient) Prompt}
\label{app:prompt_backward}

The backward prompt (\texttt{BACKWARD\_SYSTEM\_PROMPT} in \texttt{llm\_backward\_prompts.py}) asks the engine to act as the ``gradient'' with respect to the context variable: given the current system prompt, the question, the model's response, and the judge feedback, produce \emph{criticism of the context}---strategies, explanations, and methods to change---but never a rewritten version of the variable (that is the optimizer's job).
It carries the same no-leakage and conciseness clauses as the judge prompt.
The gradient is injected through a structured conversation template:

\begin{quote}
\small
\texttt{<LM\_SYSTEM\_PROMPT>\{system\_prompt\}</LM\_SYSTEM\_PROMPT>} \\
\texttt{<LM\_INPUT>\{prompt\}</LM\_INPUT>} \\
\texttt{<LM\_OUTPUT>\{response\}</LM\_OUTPUT>} \\
\texttt{<FEEDBACK>\{feedback\}</FEEDBACK>}
\end{quote}

Note that the full response is embedded in the backward prompt; this is the main driver of the judge-context length budget (${\sim}14.5$k tokens at peak for our longest responses).

\subsubsection{Optimizer (Context Rewrite) Prompt}
\label{app:prompt_optimizer}

The optimizer prompt (\texttt{OPTIMIZER\_SYSTEM\_PROMPT} in \texttt{optimizer\_prompts.py}) implements the TGD update: improve the context variable using the feedback, returning the improved variable wrapped in \texttt{\{new\_variable\_start\_tag\} \{improved variable\} \{new\_variable\_end\_tag\}} tags so it can be extracted and replace the old context directly, with an instruction to keep the improved variable concise.
The update instruction passes the variable's role, its current value, and the gradient from the backward pass.
What shapes the rewrite most is the \emph{role description} attached to the context variable:

\begin{quote}
\small
\emph{``system prompt to guide the model's behavior. A good system prompt establishes the model as a domain expert relevant to the task (e.g.\ `You are an expert chemist \ldots'), states the key reasoning approach and pitfalls to avoid, and specifies the required output format. Avoid generic phrases; be specific to the question's domain. Do NOT reveal or hint at the actual answer.''}
\end{quote}

This steers the E-step away from generic encouragements (``think carefully'') toward substantive, domain-specific guidance, which is what the M-step steering vector can then convert into a decoding-time correction.
Optional momentum (past iterations), constraints, and in-context-example sections exist in the template but are disabled in all experiments reported in this paper.

\subsubsection{Answer-Leakage Filter}
\label{app:prompt_leak}

Two complementary guards keep the optimization signal free of the answer.
First, the judge never receives the gold answer, and its prompt explicitly forbids the feedback from revealing or hinting at the final answer (Appendix~\ref{app:prompt_judge}).
Second, every context produced by the optimizer is passed through a regex filter (\texttt{\_ANSWER\_LEAK\_PATTERNS} in \texttt{optimizer.py}) that rejects rewrites containing concrete answers---e.g.\ ``the answer is X'' patterns, contrastive ``the answer is X, not Y'' constructions, or specific numbers/letters---in which case the previous context is retained.
The v2 filter no longer false-positives on the output-format directive \texttt{Answer: \$LETTER\$}: a literal \texttt{\$} is only treated as numeric leakage when followed by a digit.
Together these guards ensure that neither the judge feedback nor the optimized context can smuggle the gold answer into the generation path, so any accuracy gain is attributable to the optimization itself rather than to answer leakage.

\end{document}